\title{Extracting Weighted Automata for Approximate Minimization in Language Modelling}
\date{}
\author[1,2]{Clara Lacroce\footnote{Corresponding author: clara.lacroce@mail.mcgill.ca}\footnote{The names of the authors appear in alphabetical order}}
\author[1,2]{Prakash Panangaden}
\author[2,3]{Guillaume Rabusseau}
\affil[1]{School of Computer Science, McGill University, Montr\'eal, Canada}
\affil[2]{Mila, Montr\'eal, Canada}
\affil[3]{DIRO, Universit\'e de Montr\'eal, Montr\'eal, Canada}
\newcommand{\mat}[1]{\mathbf{#1}}
\newcommand{\norm}[1]{\|#1\|}
\theoremstyle{plain}
\newtheorem{definition}{Definition}[section]
\newtheorem{theorem}{Theorem}[section]
\newtheorem{lemma}[theorem]{Lemma}
\theoremstyle{definition}
\newtheorem{remark}{Remark}
\newcommand{\N}{\mathbb{N}}
\newcommand{\Z}{\mathbb{Z}}
\newcommand{\R}{\mathbb{R}}
\newcommand{\C}{\mathbb{C}}
\newcommand{\A}{\mat{A}}
\renewcommand{\H}{\mat{H}}
\newcommand{\mP}{\mat{P}}
\newcommand{\mQ}{\mat{Q}}
\newcommand{\mT}{\mat{T}}
\newcommand{\balpha}{\boldsymbol{\alpha}}
\newcommand{\bbeta}{\boldsymbol{\beta}}
\newcommand{\wfa}{\langle \balpha , \{\A_a\},  \bbeta \rangle}
\newcommand{\wa}{\langle \balpha , \A,  \bbeta \rangle}
\begin{document}

\maketitle

\begin{abstract}
In this paper we study the approximate minimization problem for language modelling.  We assume we are given some language model as a black box. The objective is to obtain a weighted finite automaton (WFA) that fits within a given size constraint and which mimics the behaviour of the original model while minimizing some notion of distance between the black box and the extracted WFA. We provide an algorithm for the approximate minimization of black boxes trained for language modelling of sequential data over a one-letter alphabet. By reformulating the problem in terms of Hankel matrices, we leverage classical results on the approximation of Hankel operators, namely the celebrated Adamyan-Arov-Krein (AAK) theory. This allows us to use the spectral norm to measure the distance between the black box and the WFA. We provide theoretical guarantees to study the potentially infinite-rank Hankel matrix of the black box, without accessing the training data, and we prove that our method returns an asymptotically-optimal approximation.

\noindent\textbf{Keywords:} Approximate minimization, WFA extraction, Hankel matrices, Recurrent Neural Networks, language modelling.
\end{abstract}


\section{Introduction}
\label{sec:intro}
    
    Interpretability and high computational cost are two of the main challenges arising from the use of deep learning models~\citep{doshi-rigorous}. The need to address these issues is at the root of the increasing number of works focusing on knowledge distillation~\citep{hinton2015distilling}.
    In the case of sequential data, particular attention has been given to the problem of extracting, from a Recurrent Neural Network (RNN)~\citep{Schmidhuber}, a weighted finite automaton (WFA)~\citep{Ayache2018,Rabusseau19,WeissWFA19,Takamasa,eyraud2020,probmod,Zhang}. In fact, WFAs are a less expensive alternative to RNNs, while still being expressive and suited for sequence modelling and prediction~\citep{denishmm,cortes}.
    
    The task of knowledge distillation is closely related to the more general \emph{approximate minimization problem}, where the objective is to find a model, smaller than the original one, that imitates its behaviour while minimizing the approximation error. The advantage of doing approximate minimization instead of regular extraction is that it allows us to search for the best WFA among those of a predefined size. Since automata benefit from a graphical representation, bounding the number of states can help improve interpretability~\citep{interpretingWFA}. In this paper, we tackle the approximate minimization problem for black boxes trained for language modelling over a one-letter alphabet. We remark that, even though this is a very limited setting, it constitutes a first fundamental step towards developing provable approximation algorithms for black box models.

    A key point in solving approximation tasks is to decide how to quantify the error. We propose to rewrite the problem in terms of Hankel matrices, mathematical objects related to functions defined on sequential data. In particular, we choose to measure the error in terms of the \emph{spectral norm}, because of some of its desirable features. Indeed, the spectral norm of the Hankel matrix of a WFA can be computed in polynomial time~\citep{AAK-WFA} and we show that, similarly, minimizing the approximation error between a WFA and a black box model can be (asymptotically) solved optimally in a tractable way.
    Thus, using our method, we can measure the distance between a given RNN and the extracted WFA. This is particularly valuable, especially in light of the paper of~\citet{DistanceRNNWFA}, where the authors show that the general equivalence problem between classes of WFAs and RNNs is at best intractable, if not undecidable. The choice of this norm has the advantage that it allows us to analyze different models through their Hankel matrices, independently of the specific architecture considered. This means that addressing the approximate minimization problem using the spectral norm can facilitate the comparison between different classes of models, and the development of a distance that can be precisely computed and minimized. This is possible because Hankel matrices are at the core of the influential work of~\citet{AAK71}, which constitutes the main theoretical background on which we build our analysis. This theory has been applied before to the approximate minimization problem for WFAs, but the approach relies on the Hankel matrix considered to have known finite rank, so it cannot be directly generalized~\citep{AAK-WFA}.
    
\paragraph*{Contributions}

    The main contributions of this paper are the following:
    \begin{itemize}
        \item We present a new theoretical framework for WFA extraction from a black box trained for language modelling of sequential data over a one-letter alphabet.
        \item We use tools from control theory and arguments from random matrix theory to extend the work of~\citet{AAK-WFA} to the case of black boxes having infinite-rank Hankel matrices.
        \item We propose an algorithm that, given a black box model $\mathcal{M}$ trained for language modelling on a one letter alphabet and a target size $k$, returns a WFA with $k$ states corresponding to an asymptotically-optimal spectral approximation of $\mathcal{M}$. We do not assume any knowledge on the internal structure of the black box, nor on the training data.
        \item We propose a new way to compute the distance between a black box and the extracted WFA, based on AAK theory. We provide bounds on the approximation error in terms of spectral and $\ell^2$ norm, and strategies to improve precision when the rank is infinite.
    \end{itemize}

\section{Background}
\label{sec:background}
    We recall the definitions and basic results that we will use throughout the paper.
    
\subsection{Notation}

    Let $\N$, $\Z$ and $\R$ be the set of natural, integer and real numbers, respectively. We use bold letters for vectors and matrices; all vectors are column vectors unless otherwise specified. We denote with $\mat{v}(i)$, $\mat{M}(i,:)$ and $\mat{M}(:,j)$ the $i$-th component of the vector $\mat{v}$, and the $i$-th row and $j$-th column of $\mat{M}$, respectively. A \emph{rank factorization} of $\mat{M}\in \R^{p\times q} $ of rank $n$ is a factorization $\mat{M}=\mP\mQ$, with $\mP \in \R^{p\times n}$, $\mQ \in \R^{n\times q}$, with $\mP$, $\mQ$ of rank $n$. Let $\mat{M} \in \R^{p \times q}$ of rank $n$, the compact \emph{singular value decomposition}~(SVD) of $\mat{M}$ is $\mat{M}=\mat{U}\mat{D}\mat{V}^{\top}$, where $\mat{U}\in \R^{p\times n}$, $\mat{D}\in \R^{n\times n}$, $\mat{V}\in \R^{q \times n}$, with $\mat{U}^{\top}\mat{U}=\mat{V}^{\top}\mat{V}=\mat{1}$, and $\mat{D}$ is diagonal. The columns of $\mat{U}$ and $\mat{V}$ are called left and right \emph{singular vectors}, while the entries $\sigma_0 \geq \dots \geq \sigma_{n-1} > 0$ of $\mat{D}$ are the \emph{singular values}. The \emph{Moore-Penrose pseudo-inverse} $\mat{M}^+$ of $\mat{M}$ is the unique matrix such that $\mat{M}\mat{M}^+\mat{M}=\mat{M}$, $\mat{M}^+\mat{M}\mat{M}^+=\mat{M}^+$, with $\mat{M}^+\mat{M}$ and $\mat{M}\mat{M}^+$ Hermitian.

    A \emph{Hilbert space} is a complete normed vector space where the norm arises from an inner product.
    Let $X$, $Y$ be Hilbert spaces. A linear operator $T: X \rightarrow Y$ is \emph{bounded} if it has finite \emph{operator norm}, \emph{i.e.} $\norm{T}_{op} = \sup_{\norm{g}_X\leq 1}\norm{Tg}_Y<\infty$, while it is \emph{compact} if it is the limit of finite rank operators in the operator norm. When $T$ is the limit in the operator norm of the sequence of operators $\{T^i\}_{i\geq 0}$, we use $T^i\to T$ to denote the limit: $\lim_{i\to\infty}\norm{T^i-T}=0$. Let $T:X \rightarrow Y$ be a compact operator, the \emph{adjoint} $T^*$ is the linear operator $T^*:Y \rightarrow X$ such that $\langle Tx,y \rangle_Y=\langle x,T^*y\rangle_X$, where $\langle \cdot,\cdot \rangle$ is the inner product of the corresponding Hilbert space, $x\in X$, $y\in Y$. The \emph{singular numbers} $\{\sigma_n\}_{n \geq 0}$ of $T$ are the square roots of the eigenvalues of $T^* T$, arranged in decreasing order. A singular number is \emph{simple} if it is not repeated. Let $\mat{T}$ be the infinite matrix associated with $T$ by some canonical orthonormal basis. The Hilbert-Schmidt decomposition generalizes the compact SVD for the infinite matrix of a compact operator $T$, using singular numbers and orthonormal vectors:
    $\mT\mat{x}=\sum_{n\geq0}\sigma_n\langle\mat{x},\boldsymbol{\xi}_n \rangle \boldsymbol{\eta}_n$ 
    (see~\citet{Zhu}). The \emph{spectral norm} $\norm{\mat{T}}$ of the matrix of the operator $T$ is the largest singular number, and corresponds to the operator norm of $T$. 
    
    Let $\mathbb{T}=\{z\in \C: |z|=1\}$ and $\mathbb{D}=\{z\in \C: |z|<1\}$ be the complex unit circle and the complex unit disc, respectively, $z$ the complex variable. Let $p>1$, we denote with $\mathcal{L}^p(\mathbb{T})$ the space of measurable functions on $\mathbb{T}$ with the property that the $p$-th power of their absolute value is Lebesgue integrable.

\subsection{Hankel Matrix and WFAs}
\label{subsection:WFA}
    
    Let $\Sigma$ be a fixed finite alphabet and $\Sigma^*$ be the set of all finite strings with symbols in $\Sigma$. We denote with $\varepsilon$ the empty string, and $\Sigma' = \Sigma\cup\{\varepsilon\}$. Given $p,s \in \Sigma^*$, we denote with $ps$ their concatenation. Let $f : \Sigma^* \to \R$ be a function defined on sequences, we can consider a bi-infinite matrix $\H_f \in \R^{\Sigma^* \times \Sigma^*}$ having rows and columns indexed by strings and defined by $\H_f(p,s) = f(ps)$ for $p, s \in \Sigma^*$.
    
    \begin{definition}
         A (bi-infinite) matrix $\H \in \R^{\Sigma^* \times \Sigma^*}$ is \textbf{Hankel} if for all $p, p', s, s' \in \Sigma^*$ such that $p s = p' s'$, we have $\H(p,s) = \H(p',s')$. Given a Hankel matrix $\H \in \R^{\Sigma^* \times \Sigma^*}$, there exists a unique function $f : \Sigma^* \to \R$ such that $\H_f = \H$.
    \end{definition}
    
    Weighted finite automata are a class of models defined over sequential data. A \emph{weighted finite automaton} (WFA) of $n$ states over the alphabet $\Sigma$ is a tuple $A = \wfa$, where $\balpha,$ $\bbeta \in \R^n$ are the vector of initial and final weights, respectively, and $\A_a \in \R^{n \times n}$ is the matrix containing the transition weights associated with each symbol $a\in\Sigma$. While WFAs can in general be defined over semirings, we will only consider automata with real weights. In this case, every WFA $A$ computes a function $f_A : \Sigma^* \to \R$, \emph{i.e.}, given a string  $x = x_1 \cdots x_t \in \Sigma^*$, it returns $f_A(x) = \balpha ^\top \A_{x_1} \cdots \A_{x_t} \bbeta = \balpha ^\top \A_x \bbeta$. We say that the function $f: \Sigma^* \to \R$ is \emph{rational} if there exists a WFA $A$ with $f=f_A$, and the \emph{rank} of $f$ is the size of the smallest WFA computing $f$. We can use the Hankel matrix $\H_f$ to recover information about the WFA.
    
    \begin{theorem}[\citet{CP71,Fli}]\label{fliess}
        A function $f:\Sigma^* \to \R$ can be computed by a WFA if and only if the corresponding Hankel matrix $\H_f$ has finite rank $n$. In that case, $n$ is the minimal number of states of any WFA computing $f$.
    \end{theorem}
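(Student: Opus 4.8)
The plan is to prove the two implications separately and then read off the rank characterization of minimality as a consequence of both. Throughout I will exploit the defining Hankel property --- that $\H_f(p,s)$ depends only on the concatenation $ps$ --- to move symbols between the prefix index and the suffix index, which is exactly what lets the transition weights of a candidate WFA act as shift operators on a finite-dimensional space.

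First I would prove the easy direction: if $f = f_A$ for some WFA $A = \wfa$ with $n$ states, then $\rank(\H_f) \le n$. For a string $p = p_1\cdots p_k$ write $\A_p = \A_{p_1}\cdots\A_{p_k}$ (with $\A_\varepsilon = \mat{I}_n$), so that $f(ps) = \balpha^\top \A_p \A_s \bbeta$. Defining $\mP \in \R^{\Sigma^*\times n}$ by $\mP(p,:) = \balpha^\top \A_p$ and $\mQ \in \R^{n\times\Sigma^*}$ by $\mQ(:,s) = \A_s\bbeta$ gives a factorization $\H_f = \mP\mQ$ through an $n$-dimensional space, whence $\rank(\H_f)\le n < \infty$.

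The substantive direction is the converse: given that $\H_f$ has finite rank $n$, construct a WFA with $n$ states computing $f$. I would start from a rank factorization $\H_f = \mP\mQ$ with $\mP \in \R^{\Sigma^*\times n}$, $\mQ \in \R^{n\times\Sigma^*}$ both of rank $n$, and for each $a \in \Sigma$ form the shifted block $\H_f^{(a)}$ with $\H_f^{(a)}(p,s) = f(pas)$. The Hankel property gives the two readings $\H_f^{(a)}(p,s) = \H_f(pa,s) = \H_f(p,as)$, so the rows of $\H_f^{(a)}$ are rows of $\H_f$ and its columns are columns of $\H_f$; hence its row space lies in that of $\mQ$ and its column space in that of $\mP$. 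Using the left inverse of $\mP$ and the right inverse of $\mQ$ I would set $\A_a = \mP^+ \H_f^{(a)} \mQ^+$, so that $\mP\A_a\mQ = \H_f^{(a)}$, and read off $\balpha^\top = \mP(\varepsilon,:)$ and $\bbeta = \mQ(:,\varepsilon)$. Comparing entries of $\mP\A_a\mQ = \H_f^{(a)}$ and cancelling the full-rank factor $\mQ$ yields the recurrence $\mP(pa,:) = \mP(p,:)\A_a$; iterating it shows $\mP(x_1\cdots x_t,:) = \balpha^\top\A_{x_1}\cdots\A_{x_t}$, and evaluating $f(x) = \H_f(x,\varepsilon) = \mP(x,:)\mQ(:,\varepsilon)$ gives $f(x) = \balpha^\top\A_{x_1}\cdots\A_{x_t}\bbeta = f_A(x)$, as required.

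Finally, minimality is immediate: the forward direction shows any WFA with $m$ states satisfies $\rank(\H_f)\le m$, while the construction produces a WFA with exactly $\rank(\H_f) = n$ states, so the minimal number of states equals $n$. The main obstacle I anticipate is making the backward construction fully rigorous in the bi-infinite setting: I must justify that $\A_a$ is well defined, that $\mP\mP^+$ and $\mQ^+\mQ$ act as the identity on the column space and row space of $\H_f^{(a)}$ respectively (which is where the row/column containment coming from the Hankel property is essential), and that finiteness of the rank lets me replace the infinite matrices by a genuine $n$-dimensional basis of prefixes and suffixes so that the required one-sided inverses exist. Once this is in place, verifying $f_A = f$ reduces to the routine induction on string length sketched above.
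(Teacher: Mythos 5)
Your proof is correct, but note that the paper does not actually prove this statement: it is quoted as a classical result (Carlyle--Paz, Fliess) with citations only, so there is no in-paper proof to compare against. Your argument is the standard one for this theorem, and it is worth observing that it coincides exactly with the machinery the paper itself recalls: your construction $\A_a = \mP^+ \H_f^{(a)} \mQ^+$, $\balpha^\top = \mP(\varepsilon,:)$, $\bbeta = \mQ(:,\varepsilon)$ is precisely the spectral method of Equation~\ref{eq:spectralmethod}, applied to a rank factorization of the full Hankel matrix rather than a finite sub-block. Both directions are sound: the factorization $\H_f = \mP\mQ$ through $\R^n$ gives $\rank(\H_f)\le n$, and the row/column containment of the shifted blocks $\H_f^{(a)}$ in the spaces spanned by $\mQ$ and $\mP$ is exactly what makes $\mP\A_a\mQ = \H_f^{(a)}$ hold, after which the cancellation of $\mQ$ (legitimate since $\mQ$ has full row rank $n$ and hence a right inverse) yields the recurrence and the induction. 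The one point requiring care, which you correctly flag, is that the Moore--Penrose pseudo-inverse as defined in the paper applies to finite matrices; for the bi-infinite $\mP$ and $\mQ$ the clean fix is the one you sketch: finite rank $n$ guarantees the existence of $n$ prefixes and $n$ suffixes whose $n\times n$ sub-block of $\H_f$ is invertible, and carrying out the construction on that complete finite sub-basis (as the paper's own spectral method does) makes every inverse a genuine finite-matrix inverse. With that substitution your argument is complete, including the minimality claim, which follows from combining the two directions exactly as you state.
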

    
    Given a Hankel matrix $\H_f$ of rank $n$, we can recover the minimal WFA $A$ computing $f$ by using the method proposed in~\citet{BalleCLQ14}, an efficient \emph{spectral algorithm} which is robust to noise. In particular, we can consider a basis  $\mathcal{B}=(\mathcal{P},\mathcal{S})$, with $\mathcal{P},\mathcal{S} \subset \Sigma^*$, and a sub-block $\H_{\mathcal{B}}$ of $\H_f$ defined over $\mathcal{B}$. The method can be applied whenever $\mathcal{B}$ is \emph{prefix-closed} and \emph{complete}, \emph{i.e.}, when $\mathcal{P}=\mathcal{P}'\cdot\Sigma'$ for some $\mathcal{P}'$, and $\H_{\mathcal{B}}$ has rank $n$. In this case, we can consider the sub-block $\H_{a}$ defined over $\mathcal{B}$ by $\H_{a}(u,v)=\H(u\cdot a,v)$ for each $a \in \Sigma'$, 
    and the vectors $\mat{h}_{\mathcal{P},\varepsilon},\, \mat{h}_{\varepsilon,\mathcal{S}}$ having coordinates $\mat{h}_{\mathcal{P},\varepsilon}(u)=\H(u,\varepsilon)$ and $\mat{h}_{\varepsilon,\mathcal{S}}(v)=\H(\varepsilon,v)$.
    Then, from the rank factorization $\H_{\varepsilon}=\mat{P}\mat{S}$ we can compute a minimal WFA $A= \wfa$ for $f$:
    \begin{equation}\label{eq:spectralmethod}
        \balpha^\top= \mat{h}_{\varepsilon,\mathcal{S}}^\top \mat{S}^+, \quad \bbeta=\mat{P}^+ \mat{h}_{\mathcal{P},\varepsilon}, \quad \A_{a}=\mat{P}^+\H_{a}\mat{S}^+.
    \end{equation}

\subsection{Recurrent Neural Networks}

    Recurrent Neural Networks~\citep{Schmidhuber}, or RNNs, are a class of neural networks designed to process sequential data. Unlike feedforward neural networks, RNNs maintain an internal memory based on history information through the hidden states. At each timestep, a RNN receives an input and returns a new state vector, depending on the input and on the sequence received so far. There exists several types of architectures for these models, which makes them well suited for a variety of tasks~\citep{WeissGY18,merrill2020}. Analogously to~\citet{Ayache2018} and \citet{WeissWFA19}, we focus on LM-RNNs, where the RNN is trained for \emph{language modelling}, and the task is to predict the next element in a sequence. Thus, a LM-RNN can be seen as computing the probability associated to a string, and it defines a distribution over sequences that can be represented by a Hankel matrix.

\subsection{AAK Theory}
\label{subsection:AAK}

    The key idea behind our method is that, since a model computing $f:\Sigma^* \rightarrow \R$ corresponds to a Hankel matrix $\H=\H_f$, the minimization problem can be reformulated using Hankel matrices. The objective becomes to find a Hankel matrix $\mat{G}$ that approximates $\H$ optimally in the spectral norm, and then extract a WFA from it. This approach has been explored before by~\citet{AAK-WFA}, but their method does not generalize to infinite-rank Hankel matrices. We recall a well known result in low-rank matrix approximation. 
    \begin{theorem}[\citet{Eckart}]\label{thm:eckart}
        Let $\H$ be a Hankel matrix of rank $n$, and let $\sigma_0 \geq \dots \geq \sigma_{n-1}>0$ be the sequence of its singular numbers. Then, if $\mat{R}$ is a matrix having rank $k$, we have:
        \begin{equation}
            \norm{\H - \mat{R}}\geq \sigma_k
        \end{equation}
        and the minimum is attained when $\mat{R}$ is the truncated SVD of $\H$. 
    \end{theorem}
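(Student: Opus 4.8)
The plan is to establish the two assertions separately: that the truncated SVD of $\H$ attains error exactly $\sigma_k$, and that no rank-$k$ matrix can achieve a strictly smaller spectral error. Throughout I assume $k<n$, since for $k\geq n$ we have $\sigma_k=0$ and the claim is trivial (take $\mat{R}=\H$).

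For the achievability half, I would start from the compact SVD $\H=\mat{U}\mat{D}\mat{V}^\top=\sum_{i=0}^{n-1}\sigma_i\mat{u}_i\mat{v}_i^\top$, where $\mat{u}_i,\mat{v}_i$ denote the left and right singular vectors. Setting $\mat{R}=\sum_{i=0}^{k-1}\sigma_i\mat{u}_i\mat{v}_i^\top$, the truncated SVD, which has rank $k$, the residual $\H-\mat{R}=\sum_{i=k}^{n-1}\sigma_i\mat{u}_i\mat{v}_i^\top$ is itself in singular-value form with leading singular number $\sigma_k$, so $\norm{\H-\mat{R}}=\sigma_k$. This shows the lower bound is tight.

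For the lower bound I would use a dimension-counting argument to manufacture a unit test vector on which $\mat{R}$ vanishes while $\H$ stays large. Let $\mathcal{V}=\mathrm{span}(\mat{v}_0,\dots,\mat{v}_k)$ be the span of the top $k+1$ right singular vectors, so $\dim\mathcal{V}=k+1$. Since $\mat{R}$ has rank $k$, its kernel $\ker\mat{R}$ has codimension $k$; hence $\dim(\mathcal{V}\cap\ker\mat{R})\geq (k+1)-k=1$, and there is a unit vector $\mat{w}\in\mathcal{V}\cap\ker\mat{R}$. This step survives unchanged if the ambient space is infinite-dimensional, since $\mathcal{V}$ is finite-dimensional and the codimension of $\ker\mat{R}$ equals the rank of $\mat{R}$.

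I would then bound $\norm{\H-\mat{R}}$ from below by its action on $\mat{w}$. Writing $\mat{w}=\sum_{i=0}^{k}c_i\mat{v}_i$ with $\sum_i|c_i|^2=1$, orthonormality of the right singular vectors gives $\H\mat{w}=\sum_{i=0}^{k}c_i\sigma_i\mat{u}_i$ and $\norm{\H\mat{w}}^2=\sum_{i=0}^{k}|c_i|^2\sigma_i^2\geq\sigma_k^2$, using $\sigma_i\geq\sigma_k$ for $i\leq k$. Because $\mat{R}\mat{w}=\mat{0}$, we have $(\H-\mat{R})\mat{w}=\H\mat{w}$, whence $\norm{\H-\mat{R}}\geq\norm{(\H-\mat{R})\mat{w}}=\norm{\H\mat{w}}\geq\sigma_k$. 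Combined with the achievability, this proves the theorem. The only genuinely delicate point is the subspace-intersection count, where the rank constraint on $\mat{R}$ is essential; the remaining steps are routine consequences of the SVD.
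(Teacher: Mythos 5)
Your proof is correct. There is nothing in the paper to compare it against: the paper imports this statement by citation (it is the classical Eckart--Young--Mirsky theorem for the spectral norm) and never proves it, relying on it only as a known lower bound in the proof of Theorem~\ref{thm:nearopt}. Your argument is the standard one: the truncated SVD achieves error $\sigma_k$ because the residual $\sum_{i=k}^{n-1}\sigma_i\mat{u}_i\mat{v}_i^\top$ is already in singular-value form, and no rank-$k$ matrix can do better because the $(k+1)$-dimensional span of the top right singular vectors must intersect the codimension-$k$ kernel of $\mat{R}$ nontrivially, yielding a unit vector $\mat{w}$ with $\norm{(\H-\mat{R})\mat{w}}=\norm{\H\mat{w}}\geq\sigma_k$. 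Your remark that the dimension count survives in an infinite-dimensional ambient space is worth having made explicit, since the paper applies the bound to bi-infinite Hankel matrices of finite rank $n$, where rank-nullity must be invoked in the form ``codimension of the kernel equals the rank.'' The only blemish is the degenerate case $k\geq n$: there you suggest taking $\mat{R}=\H$, which has rank $n$ rather than $k$; this is harmless (and moot) because the paper's indexing $\sigma_0\geq\dots\geq\sigma_{n-1}>0$ presupposes $k<n$, which is the only case used.
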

    Unfortunately this result does not solve our problem, since truncating the SVD does not necessarily produce a Hankel matrix, which is required to recover a WFA. When $|\Sigma|=1$, the issue can be solved using a theory of optimal approximation called Adamyan-Arov-Krein (AAK) theory~\citep{AAK71}, which allows us to search for the best approximation directly in the set of finite-rank Hankel matrices. In order to introduce AAK theory, for the rest of this section we will assume $|\Sigma|=1$. The same assumption will be required in the contribution (for more details we refer the reader to Section~\ref{assumptions}). When the alphabet only has one letter, we can denote a string with the number corresponding to how many times the single character is repeated (\emph{e.g.} $'aaa'=3$), and we can identify $\Sigma^*$ with $\N$. Let $\ell^2$ be the Hilbert space of square-summable sequences over $\N$. We interpret the  Hankel matrix $\H_f$ associated to $f:\N \rightarrow \R$ as the expression, in terms of the canonical basis, of a linear Hankel operator $H_f:\ell^2 \rightarrow \ell^2$. To reformulate the problem in the setting of AAK theory, we embed $\ell^2$ into $\ell^2(\Z)$, and apply the Fourier isomorphism to associate a complex function to each sequence in $\ell^2(\Z)$. In fact, a function  $\phi(z) \in \mathcal{L}^2(\mathbb{T})$ in the complex variable $z$ can be represented by its Fourier expansion
    \begin{equation}
        \phi(z)=\sum_{n \in \Z}\widehat{\phi}(n)z^n
    \end{equation}
    and can be identified, using the orthonormal basis $\{z^n\}_{n \in \Z}$, with the sequence of its Fourier coefficients 
    \begin{equation}
        \widehat{\phi}(n)= \int_{\mathbb{T}}\phi(z) \bar{z}^n dz, \, n \in \Z.
    \end{equation}
    Note that in this paper we will work with two classes of functions, functions over sequences and complex functions. To avoid any confusion we will make explicit the dependence on the complex variable $z=e^{it}$.
    
    We can now partition the function space $\mathcal{L}^2(\mathbb{T})$ into two subspaces.
    
    \begin{definition}\label{def}
        For $0<p\leq\infty$ , the \textbf{Hardy space} $\mathcal{H}^p$ and the \textbf{negative Hardy space} $\mathcal{H}^p_-$ on $\mathbb{T}$ are the subspaces of $\mathcal{L}^p(\mathbb{T})$ defined as:
        \begin{align}
            \mathcal{H}^p&= \{ \phi(z) \in \mathcal{L}^p(\mathbb{T}) : \widehat{\phi}(n)=0, n < 0\}, \\
            \mathcal{H}^p_-&=\{ \phi(z) \in  \mathcal{L}^p(\mathbb{T}) : \widehat{\phi}(n)=0, n \geq 0\}. \notag
        \end{align}
    \end{definition}
    Since the elements of the Hardy space $\mathcal{H}^p$ can be canonically identified with the set of $p$-integrable functions analytic in $\mathbb{D}$, we will make no difference between these functions in the complex unit disc and their boundary value on the complex unit circle~\citep{Nikolski}. 
    
    It is possible to characterize Hankel operators using Hardy spaces (more details can be found in~\citet{Nikolski}). Let $\mathbb{P}_-:\mathcal{L}^2(\mathbb{T}) \rightarrow \mathcal{H}^2_- $ be the orthogonal projection on the negative Hardy space.
    \begin{definition}\label{Hankel2}
        Let $\phi(z)$ be a function in $\mathcal{L}^2(\mathbb{T})$. A \textbf{Hankel operator} is an operator $H_{\phi}:\mathcal{H}^2 \rightarrow \mathcal{H}^2_-$ defined by $ H_{\phi}f(z)=\mathbb{P}_-\phi f(z)$. The function $\phi(z)$ is a \textbf{symbol} for $H_{\phi}$.
    \end{definition}
    
    From now on, Hankel operators will always be interpreted in Hardy spaces.
    
    We recall that a complex function $\phi(z)$ is \emph{rational} if $\phi(z)=p(z)/q(z)$, with $p(z)$ and $q(z)$ polynomials, and it is \emph{strictly proper} if the degree of $p(z)$ is strictly smaller than that of $q(z)$. Finite rank Hankel operators are closely related to the theory of rational functions.
    \begin{theorem}[\citet{kronecker}]\label{theorem:Kronecker}
        Let $H_{\phi}$ be a bounded Hankel operator with matrix $\H$. Then $\H$ has finite rank if and only if $\mathbb{P}_-\phi$ is a strictly proper rational function. Moreover the rank of $\H$ is equal to the number of poles, counted with their multiplicities, that the function $\mathbb{P}_-\phi$ has inside $\mathbb{D}$.
    \end{theorem}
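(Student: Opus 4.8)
The plan is to reduce everything to the sequence of Fourier coefficients defining the symbol and to translate the two sides of the equivalence into a single statement about linear recurrences. First I would observe that $H_\phi$ depends only on $\mathbb{P}_-\phi$: writing $\phi = \phi_+ + \psi$ with $\phi_+ \in \mathcal{H}^2$ and $\psi = \mathbb{P}_-\phi \in \mathcal{H}^2_-$, the product $\phi_+ f$ stays in $\mathcal{H}^2$ for every $f \in \mathcal{H}^2$ and is annihilated by $\mathbb{P}_-$, so $H_\phi = H_\psi$. Expanding $\psi(z) = \sum_{m \ge 0} a_m z^{-m-1}$ with $a_m = \widehat{\phi}(-m-1)$ and evaluating $H_\psi z^j = \mathbb{P}_-(\psi z^j)$ in the orthonormal bases $\{z^j\}_{j\ge0}$ of $\mathcal{H}^2$ and $\{z^{-i-1}\}_{i\ge0}$ of $\mathcal{H}^2_-$, I obtain the entries $\H(i,j) = a_{i+j}$, which exhibits the Hankel structure explicitly and identifies the generating function $G(x) = \sum_{m\ge0} a_m x^m$ through $\psi(z) = z^{-1} G(z^{-1})$. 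The theorem then follows by chaining two equivalences: (i) $\H$ has finite rank $N$ iff the sequence $\{a_m\}$ satisfies a linear recurrence of minimal order $N$; and (ii) $\{a_m\}$ satisfies such a recurrence iff $\psi$ is strictly proper rational with exactly $N$ poles counted with multiplicity.

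For (i) I would exploit the shift structure of the rows. Denoting by $r_i = (a_i, a_{i+1}, \dots)$ the $i$-th row, the Hankel identity gives $r_{i+1} = \mathrm{Shift}(r_i)$, where $\mathrm{Shift}$ deletes the first entry and is linear. Hence any linear relation $\sum_i c_i r_i = 0$ is mapped by $\mathrm{Shift}$ to $\sum_i c_i r_{i+1} = 0$, so a single dependency among consecutive rows propagates to all later rows. Consequently, if $r_0,\dots,r_{N-1}$ are independent while $r_N \in \mathrm{span}(r_0,\dots,r_{N-1})$, then every $r_{N+k}$ lies in that same span, the row space is exactly $N$-dimensional, and the coefficients of the dependency are precisely a linear recurrence $a_{n+N} = \sum_{i<N} c_i a_{n+i}$ valid for all $n \ge 0$; conversely, a minimal recurrence of order $N$ forces $\rank \H = N$ by the same propagation. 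This reduces the rank computation to the order of the minimal recurrence.

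For (ii) and the localization of the poles, I would use the standard generating-function dictionary: $\{a_m\}$ obeys an order-$N$ linear recurrence iff $G(x) = P(x)/Q(x)$ with $\deg Q = N$ and $\deg P < N$, the denominator being the reversed characteristic polynomial. Substituting $x = z^{-1}$ shows $\psi(z) = z^{-1} G(z^{-1})$ is rational with $\psi(\infty) = 0$, hence strictly proper, and its poles occur exactly at the roots $\lambda_1,\dots,\lambda_r$ of the characteristic polynomial, counted with multiplicity, which gives $N$ of them. Finally, since $\psi \in \mathcal{H}^2_-$ its coefficients are square-summable, so $a_m \to 0$; a partial-fraction expansion shows $a_m$ is a sum of terms polynomial in $m$ times $\lambda_l^m$, and decay forces $|\lambda_l| < 1$, placing every pole inside $\mathbb{D}$. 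Thus the number of poles of $\mathbb{P}_-\phi$ in $\mathbb{D}$ equals $N = \rank \H$.

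I expect the main obstacle to be making Step (i) fully rigorous in the infinite-dimensional setting: one must argue with the full (semi-)infinite rows rather than finite truncations, check that the shift genuinely preserves the linear relations as vectors in the sequence space, and confirm that boundedness of $H_\phi$ is exactly what guarantees the coefficient sequence is square-summable, so that the pole-localization in Step (ii) is legitimate. The algebraic identification of recurrence order, denominator degree, and pole multiplicity is routine once this structural equivalence is in place.
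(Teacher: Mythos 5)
The paper does not actually prove this statement: Theorem~\ref{theorem:Kronecker} is Kronecker's classical theorem, imported by citation as background material, so there is no internal proof to compare against. Judged on its own, your argument is the standard literature proof (as in Nikolski or Peller) and is essentially correct: reduce the symbol to $\psi=\mathbb{P}_-\phi$, compute the matrix entries $\H(i,j)=a_{i+j}$ with $a_m=\widehat{\phi}(-m-1)$, use shift-invariance to show that a linear relation among rows propagates forward (so $\rank\H$ equals the minimal order of a linear recurrence for $\{a_m\}$), and invoke the generating-function dictionary between order-$N$ recurrences and strictly proper rational functions with denominator degree $N$. Two points need tightening. First, the pole-localization step (``decay forces $|\lambda_l|<1$'') is glossed: when several distinct unimodular roots $\lambda_l$ occur, you must rule out cancellation among the terms $p_l(m)\lambda_l^m$; the clean fix is Ces\`aro orthogonality of unimodular characters, namely $\frac{1}{M}\sum_{m<M}\bigl|\sum_l \beta_l\lambda_l^m\bigr|^2 \to \sum_l|\beta_l|^2$, which shows that a nonzero exponential-polynomial part with unimodular roots cannot tend to zero. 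Second, your closing remark misattributes the source of square-summability: it is not boundedness of $H_\phi$ but the standing assumption $\phi\in\mathcal{L}^2(\mathbb{T})$ (Definition~\ref{Hankel2}) that puts $\mathbb{P}_-\phi\in\mathcal{H}^2_-$ and makes $\{a_m\}\in\ell^2$; boundedness of a Hankel operator is characterized by Nehari's theorem (Theorem~\ref{thm:nehari}) and is a genuinely different condition. Relatedly, $\phi_+f$ need not lie in $\mathcal{L}^2$ for $f\in\mathcal{H}^2$, so the identity $H_\phi=H_\psi$ should be verified on polynomials and extended by density; this is routine but worth stating.
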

    
    In this remark we highlight the two interpretations of Hankel matrices used in the paper. For an example, we refer the reader to Appendix~\ref{appendix:ex}.
    \begin{remark}\label{remark1}
        On the one hand we can consider the matrix $\H$ with respect to the basis of the sequence space $\ell^2$, and associate $\H$ with the function $f:\N \rightarrow \R$. In this case $\H(i,j)=f(i+j)$ for $i,j\geq0$. On the other hand, we can look at $\H$ with respect to the standard orthonormal bases of $\mathcal{H}^2$ and $\mathcal{H}^2_-$. Now, $\H$ is associated with a complex function $\phi(z)\in \mathcal{L}^2(\mathbb{T})$, and we have $\H(j,k)= \widehat{\phi}(-j-k-1)$. Note that $f$ and $\phi$ are related through the Fourier isomorphism, and we have: $f(n)=\widehat{\phi}(-n-1)$.
    \end{remark}
    
    The core result of AAK theory is that, when minimizing a compact Hankel operator, the constraint of preserving the Hankel property does not affect the quality of the approximation. This means that we can attain the same error as the truncated SVD while searching for the optimal approximation within the class of Hankel matrices.
  
    \begin{theorem}[\citet{AAK71}]\label{theorem:aakop}
        Let $H$ be a compact Hankel operator, with matrix $\H$ of rank $n$ and singular numbers $\sigma_0 \geq \dots \geq \sigma_{n-1}>0$. Then, there exists a unique Hankel operator $G_k$ of rank $k<n$ such that
        \begin{equation}
            \norm{\H - \mat{G}_k}= \sigma_k.
        \end{equation}
        We say that $G_k$ is the optimal approximation.
    \end{theorem}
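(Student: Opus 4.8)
The lower bound is immediate and requires no new work: a rank-$k$ Hankel operator $\mat{G}_k$ is in particular a rank-$k$ matrix, so Theorem~\ref{thm:eckart} already gives $\norm{\H - \mat{G}_k}\geq \sigma_k$. The entire content of the statement is therefore (i) to exhibit a \emph{Hankel} operator of rank exactly $k$ that attains this bound, showing that the Hankel constraint costs nothing, and (ii) to argue that it is the only such operator. My plan is to run the classical Adamyan--Arov--Krein argument built on the Schmidt (singular) pair attached to $\sigma_k$, using Kronecker's theorem (Theorem~\ref{theorem:Kronecker}) to pin down the rank.

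Interpreting $\H$ as the Hankel operator $H=H_{\phi}:\mathcal{H}^2\to\mathcal{H}^2_-$ with symbol $\phi(z)$, I would first fix a Schmidt pair for $\sigma_k$, i.e. functions $\xi(z)\in\mathcal{H}^2$ and $\eta(z)\in\mathcal{H}^2_-$ with $H\xi=\sigma_k\eta$ and $H^*\eta=\sigma_k\xi$. The first key step is the \emph{unimodularity lemma}: $|\xi(z)|=|\eta(z)|$ for almost every $z\in\mathbb{T}$. Granting this, the quotient $\eta/\xi$ is unimodular a.e., so the candidate symbol $\psi(z)=\phi(z)-\sigma_k\,\eta(z)/\xi(z)$ satisfies $|\phi(z)-\psi(z)|=\sigma_k$ a.e. on $\mathbb{T}$. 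Since Hankel operators are linear in their symbols and $\phi-\psi$ is bounded, this yields $\norm{H_{\phi}-H_{\psi}}=\norm{H_{\phi-\psi}}\leq \norm{\phi-\psi}_{\mathcal{L}^\infty(\mathbb{T})}=\sigma_k$, which is the upper bound I need.

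It then remains to check that $\mat{G}_k:=H_{\psi}$ has rank exactly $k$. Here I would invoke Theorem~\ref{theorem:Kronecker}: the rank equals the number of poles, with multiplicity, of $\mathbb{P}_-\psi$ inside $\mathbb{D}$. Using the Schmidt relations to cancel the contribution of $\phi$, one shows that $\mathbb{P}_-\psi$ is a strictly proper rational function whose poles in $\mathbb{D}$ are precisely the zeros of $\xi$ in $\mathbb{D}$. The combinatorial heart of the proof is the fact that the Schmidt vector of the $k$-th singular number $\sigma_k$ has exactly $k$ zeros in $\mathbb{D}$, counted with multiplicity; this forces $\operatorname{rank}(\mat{G}_k)=k$. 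Combining with the lower bound gives $\norm{\H-\mat{G}_k}=\sigma_k$.

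For uniqueness I would argue by rigidity of the extremal problem: any Hankel operator $\mat{G}$ of rank at most $k$ achieving $\norm{\H-\mat{G}}=\sigma_k$ must realize $\xi$ as a maximizing vector, and hence share the Schmidt pair $(\xi,\eta)$ at level $\sigma_k$; the extremal unimodular symbol difference is then essentially unique, which pins down $\mat{G}=H_{\psi}$. \textbf{The main obstacle} I anticipate is step (i) in its two technical components: proving the unimodularity lemma $|\xi|=|\eta|$ a.e.\ and, above all, establishing the exact zero-count of the Schmidt vector $\xi$ in $\mathbb{D}$, since it is this count that upgrades the mere \emph{existence} of a good Hankel symbol into the statement that its rank is \emph{exactly} $k$.
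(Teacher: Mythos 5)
The paper does not prove this theorem at all: it is stated as a direct quotation of the classical Adamyan--Arov--Krein result (cited as \citet{AAK71}), so there is no in-paper proof to compare yours against. Your outline is a faithful reconstruction of the classical argument from that cited literature, and it is the right (essentially the only known) route: Theorem~\ref{thm:eckart} for the lower bound; a $\sigma_k$-Schmidt pair and the unimodularity lemma $|\xi|=|\eta|$ a.e.\ on $\mathbb{T}$ to form the modified symbol $\psi=\phi-\sigma_k\,\eta/\xi$; Nehari's theorem (which the paper records in its appendix as Theorem~\ref{thm:nehari}) to get $\norm{H_{\phi}-H_{\psi}}=\norm{H_{\phi-\psi}}\leq\norm{\phi-\psi}_{\infty}=\sigma_k$; and Kronecker's theorem (Theorem~\ref{theorem:Kronecker}) together with the shift-invariance of Hankel kernels (Beurling's theorem) to control the rank, since $H_{\psi}\xi=0$ gives $\ker H_{\psi}\supseteq\theta_{\xi}\mathcal{H}^2$ with $\theta_{\xi}$ the inner part of $\xi$.

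One genuine imprecision should be fixed. Your claim that a $\sigma_k$-Schmidt vector has \emph{exactly} $k$ zeros in $\mathbb{D}$ is too strong: when $\sigma_k$ has multiplicity greater than one, the inner degrees of Schmidt vectors vary, and the lemma you actually need is that \emph{some} Schmidt vector has inner part a finite Blaschke product of degree at most $k$. Exactness of the rank then comes for free from your own lower bound: if $\operatorname{rank}H_{\psi}=r<k$, Theorem~\ref{thm:eckart} forces $\sigma_r=\sigma_k$, which is impossible provided $\sigma_{k-1}>\sigma_k$. This hypothesis is not cosmetic: when $\sigma_{k-1}=\sigma_k$, the unique minimizer over Hankel operators of rank at most $k$ has rank strictly less than $k$, so no Hankel operator of rank exactly $k$ attains $\sigma_k$, and the statement as the paper phrases it (rank exactly $k$, uniqueness) fails. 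Your proof, like the paper's formulation, therefore needs the standing assumption $\sigma_{k-1}>\sigma_k$ --- exactly the condition the paper imposes elsewhere (Theorem~\ref{theorem:convergence} and the perturbation argument of Section~\ref{sec:compactness}). Finally, be aware that your uniqueness step is the thinnest: in the classical treatment it is a separate, genuinely nontrivial argument, not a formal consequence of the optimizer sharing the Schmidt pair.
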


    The following theorem, based on the constructive proof for Theorem~\ref{theorem:aakop}, can be used to find a symbol of the best approximation. We recall that a $\sigma$-\emph{Schmidt pair} $\{\boldsymbol{\xi}, \boldsymbol{\eta}\}$ for $H$ is a couple of vectors such that: $\mat{H}\boldsymbol{\xi}=\sigma \boldsymbol{\eta}$ and $\mat{H}^*\boldsymbol{\eta}= \sigma\boldsymbol{\xi}$, and that a \emph{Toeplitz matrix} is a matrix $\mat{T}$ with entries defined by $\mat{T}(i,j)=t_{j-k}$ for $j,k\geq0$.
    \begin{theorem}[\citet{discreteH}]\label{toeplitz:unimodular}
        Let $\{\boldsymbol{\xi}_k, \boldsymbol{\eta}_k\}$ be any $\sigma_k$-Schmidt pair for $H$. We consider a bi-infinite upper triangular Toeplitz matrix $\mat{T}$, defined as follows:
        \begin{itemize}
            \item $\mat{T}$ has only zeros on the main diagonal
            \item the first row is defined by $\mat{T}(0,k)=\H(0,k-1)$ for $k>0$
            \item the remaining entries are defined by $\mat{T}(j,k)=\mat{T}(j+1,k+1)$.
        \end{itemize}
        Let $\mat{z}=\begin{pmatrix} 1 & z & z^2 & \dots \end{pmatrix}^{\top}$ where $z$ is the complex variable. Then, the rational function $r(z)$ corresponding to a symbol for the best approximation of rank $k$ is:
        \begin{equation}\label{eq:toeplitz}
            r(z)=\mathbb{P}_-\left(\frac{\mat{z}^{\top}\mat{T}\boldsymbol{\xi}}{\mat{z}^{\top}\boldsymbol{\xi}}\right).
        \end{equation}
    \end{theorem}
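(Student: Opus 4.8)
The plan is to translate the purely matrix-theoretic construction of $\mat{T}$ into an identity between complex functions, using the dictionary of Remark~\ref{remark1}, and then to recognize the resulting expression as the symbol produced by the constructive proof of the AAK theorem (Theorem~\ref{theorem:aakop}). First I would identify the coefficient vector $\boldsymbol{\xi}$ with the function $\xi(z)=\sum_{j\geq 0}\boldsymbol{\xi}(j)z^j\in\mathcal{H}^2$, so that $\mat{z}^{\top}\boldsymbol{\xi}=\xi(z)$, fix a symbol $\phi(z)$ for $H$, and set $\phi_-:=\mathbb{P}_-\phi$. Writing $\mathbb{P}_+:=I-\mathbb{P}_-$ for the projection onto $\mathcal{H}^2$, the heart of the argument is the identity
\[
  \mat{z}^{\top}\mat{T}\boldsymbol{\xi}=\mathbb{P}_+\big(\phi_-\,\xi\big).
\]
To establish it I would unpack the entries of $\mat{T}$: by Remark~\ref{remark1} we have $\mat{T}(0,k)=\H(0,k-1)=\widehat{\phi}(-k)$ for $k>0$, and the upper-triangular Toeplitz structure gives $\mat{T}(j,k)=\widehat{\phi}(j-k)$ for $k>j$ and $0$ otherwise. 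Computing $(\mat{T}\boldsymbol{\xi})(j)=\sum_{k>j}\widehat{\phi}(j-k)\boldsymbol{\xi}(k)$ and comparing with the Fourier coefficients of the product $\phi_-\xi$ shows that, for $j\geq 0$, $(\mat{T}\boldsymbol{\xi})(j)$ is exactly the coefficient of $z^j$ in $\phi_-\xi$, which is the claimed identity after contracting with $\mat{z}^{\top}$.

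Next I would bring in the Schmidt-pair relation. Since $(\mathbb{P}_+\phi)\,\xi$ is analytic, $H\xi=\mathbb{P}_-(\phi\xi)=\mathbb{P}_-(\phi_-\xi)=\sigma_k\eta$ with $\eta\in\mathcal{H}^2_-$. Decomposing $\phi_-\xi=\mathbb{P}_+(\phi_-\xi)+\mathbb{P}_-(\phi_-\xi)$ and substituting the identity above yields $\mat{z}^{\top}\mat{T}\boldsymbol{\xi}=\phi_-\xi-\sigma_k\eta$. Dividing by $\mat{z}^{\top}\boldsymbol{\xi}=\xi$ and applying $\mathbb{P}_-$ then gives
\[
  r(z)=\mathbb{P}_-\!\left(\phi_--\sigma_k\frac{\eta}{\xi}\right)=\mathbb{P}_-\psi,
  \qquad \psi:=\phi-\sigma_k\frac{\eta}{\xi},
\]
using $\mathbb{P}_-\phi_-=\phi_-$. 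Finally, the constructive proof of Theorem~\ref{theorem:aakop} identifies $\psi$ precisely as an optimal symbol: it shows that $\norm{\H-\mat{G}_k}=\sigma_k$ is attained by $G_k=H_\psi$, whose matrix has rank $k$ (equivalently, by Theorem~\ref{theorem:Kronecker}, $\mathbb{P}_-\psi$ is strictly proper rational with $k$ poles in $\mathbb{D}$). Because a Hankel operator depends on its symbol only through $\mathbb{P}_-$, the equality $r=\mathbb{P}_-\psi$ means $H_r=H_\psi=G_k$, so $r$ is a rational symbol for the best approximation.

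The main obstacle is twofold. The bookkeeping in the first identity must be carried out carefully, matching the two interpretations of $\H$ in Remark~\ref{remark1} so that the first row of $\mat{T}$ genuinely encodes $\phi_-$; this is routine but error-prone in the index shifts. More delicate is the division by $\xi$: the ratio $\eta/\xi$ is only defined almost everywhere on $\mathbb{T}$, and its good behaviour — in particular that $\sigma_k\,\eta/\xi$ has constant modulus $\sigma_k$ and that $\mathbb{P}_-\psi$ is rational — is exactly the content imported from the constructive AAK argument (the unimodularity reflected in the statement's name). I would therefore discharge that step by citing Theorem~\ref{theorem:aakop} rather than reproving the unimodular-ratio lemma, so that the contribution of this proof is confined to the explicit Toeplitz translation $\mat{z}^{\top}\mat{T}\boldsymbol{\xi}=\phi_-\xi-\sigma_k\eta$ that turns the abstract optimal symbol into the computable formula~\eqref{eq:toeplitz}.
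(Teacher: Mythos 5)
Your proposal is correct: the Toeplitz translation $\mat{z}^{\top}\mat{T}\boldsymbol{\xi}=\mathbb{P}_+(\phi_-\xi)$ checks out under the index conventions of Remark~\ref{remark1}, and the reduction of optimality to the symbol $\psi=\phi-\sigma_k\eta/\xi$ from the constructive AAK argument is exactly how this result is obtained in the literature. Note that the paper itself gives no proof of this statement --- it is imported as a background theorem from \citet{discreteH}, described precisely as ``based on the constructive proof for Theorem~\ref{theorem:aakop}'' --- so your derivation is essentially the intended argument, with the delicate points (unimodularity of $\eta/\xi$ and division by $\xi$) legitimately discharged to the cited AAK proof.
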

    For an example of the matrix $\mat{T}$, we refer the reader to Equation~\ref{eq:matrix}.
    
    We conclude emphasizing the important relation between matrix and operator.
    \begin{remark}\label{remark}
        A Hankel matrix $\H$ can be seen as the representation of a Hankel operator $H$ by means of a canonical basis. For example, as noted in Remark~\ref{remark1}, a Hankel matrix can be interpreted as an operator acting between sequences or between Hardy spaces, depending on the basis used. While we are interested in matrices, most of the results we use are obtained in the context of operators. To apply AAK theory, we choose to work with the basis of the Hardy spaces. This way, we have associated with the matrix a unique operator, the one defined in Definition~\ref{def}. This allows us to alternate between matrix and operator, directly transferring results from one interpretation to the other. Moreover, if $H$, $G$ are operators with matrices $\H$, $\mat{G}$, we recall that:
        \begin{equation}
            \norm{H-G}=\norm{\H-\mat{G}},
        \end{equation}
        where on the left we are considering the operator norm, and on the right the spectral one. Thus, while we keep the notations distinct to remain faithful to the original definitions (\emph{e.g.}, compactness is a property defined for $H$, not for $\H$), to have an intuition of the results it is always possible to think in terms of Hankel matrices.
    \end{remark}

\section{Asymptotically-Optimal Approximate Minimization}
    
    We are now ready to introduce the main contribution of this paper.

\subsection{Problem Formulation}

    We recall that a bounded operator $H$ is compact if and only if there exists a sequence of finite rank operators $\{H^i\}_{i\geq0}$ converging to it, \emph{i.e.} if $H^i\to H$. Let $G_k$ and $G_k^i$ be the rank $k$ optimal approximations to $H$ and $H^i$, respectively, according to Theorem~\ref{theorem:aakop}. 
    We say that the sequence of matrices $\{\mat{G}^i_k\}_{i\geq0}$ is an \emph{asymptotic sequence} for the matrix $\mat{G}_k$, if the corresponding sequence of operators $\{G^i_k\}_{i\geq0}$ converges to the operator $G_k$, \emph{i.e.}, if $G^i_k\to G_k$.
    Note that, if $\{\sigma_j\}_{j\geq 0}$ are the singular numbers of $H$, for an asymptotic sequence we have:
    \begin{equation}\label{eq:asympt}
        \lim_{i\to\infty}\norm{H - G_k^i}=\sigma_k.
    \end{equation}
    
    We can now formally define the approximation problem. Let $|\Sigma|=1$, $\Sigma^*=\N$. We consider a LM-RNN computing a function $f:\N \rightarrow \R$, with Hankel matrix $\H$ corresponding to the operator $H$. Let $k$ be the target size of the approximation, and $n>k$. We denote with $G_k$ the optimal rank $k$ approximation of $H$. We say that a WFA $\widehat{A}_k^n$ with $k$ states is an \emph{asymptotically-optimal $(n,k)$-approximation} for the LM-RNN if the Hankel matrix $\mat{G}_k^n$ of $\widehat{A}_k^n$ belongs to an asymptotic sequence for $\mat{G}_k$. In the notation, we will omit the specification $(n,k)$ whenever is clear from the context, or not relevant.
    
    Intuitively, we can consider a sequence of finite rank matrices $\{\H^i\}_{i\geq0}$ converging to $\H$, and associate to each of them a WFA (Theorem~\ref{theorem:Kronecker}). This means that we have a sequence of WFAs of increasing size that ``converges'' to the LM-RNN. The matrix $\mat{G}_k$ of rank $k$ corresponds to the optimal approximation for $\H$, \emph{i.e.}, it is the WFA $\widehat{A}_k$ with $k$ states that best approximate the LM-RNN. Now, from the sequence of matrices $\mat{G}^i_k$ of optimal rank $k$ approximations, we obtain a second sequence of WFAs $\widehat{A}^i_k$, all having size $k$. When $\{\mat{G}^i_k\}_{i\geq0}$ is an asymptotic sequence for $\mat{G}_k$, the corresponding sequence of WFAs ``converges'' to $\widehat{A}_k$.

    We will study the convergence of asymptotic sequences in the next section. In particular, we will prove that a solution for the asymptotically-optimal problem can be obtained from Theorem~\ref{theorem:aakop}, but it is not unique, since different sequences $\{\H^i\}_{i\geq0}$ lead to different approximations. Nonetheless, we will show that we can get arbitrarily close to the optimal solution. 
    
    We briefly remark that it is possible to consider an alternative formulation of the approximate minimization problem~\citep{KungLin}. In this case, instead of fixing the size of the approximation, we set the tolerance allowed for the approximation error. Thus, the objective becomes to find the smallest possible WFA such that the spectral norm of the approximation error is smaller than a fixed constant $\rho$ of choice. In this case, if $\rho\in (\sigma_{k},\sigma_{k-1})$, then the best approximation has size at least $k-1$, and can be found following the same solution we will present for the standard approximation problem. 
    
    \subsection{Assumptions}
    \label{assumptions}
 
    The main limitation of this approach is that the results outlined in Section~\ref{subsection:AAK} can be applied only if $|\Sigma|=1$. In this case, $\Sigma^*$ can be identified with $\N$, and canonically embedded into $\Z$. This fundamental step allows us to use the Fourier isomorphism to reformulate the problem in the Hardy space, where it can be solved using Theorem~\ref{theorem:aakop}. If $|\Sigma|>1$, $\Sigma^*$ is a free non-abelian monoid, therefore it cannot be embedded into $\Z$.
    Therefore, for the rest of the paper we will assume $|\Sigma|=1$, and identify $\Sigma^*=\N$.
    
    We remark that the proof of Theorem~\ref{theorem:aakop} is constructive only for compact operators. We will show that compactness is automatically respected by LM-RNNs (Theorem~\ref{thm:wiener}) and that the necessary condition is actually less restrictive. In fact, if $f$ is the function computed by the black box considered, it is enough that $f\in\ell^1$. Thus, even though we mainly refer to LM-RNNs, the proposed algorithm can be applied to any black box for language modelling on a one-letter alphabet, for example transformers~\citep{transformers}.

\subsection{Compactness of the Hankel Matrix}
\label{sec:compactness}

    To apply the results of Section~\ref{subsection:AAK}, we need to find a way to test for compactness. This is the main theoretical challenge addressed by the paper. In fact, with matrices of known finite rank, like in the case of WFAs, compactness is achieved by requiring $f\in \ell^2$~\citep{Balle19}. Then, the problem can be rewritten in terms of finite matrices, the Gramians, and it is possible to find an algorithm returning the parameters of the unique best approximating WFA~\citep{AAK-WFA}. Instead, in the case of LM-RNNs we don't have access to the full bi-infinite Hankel matrix $\H$, and the unknown rank might not be finite. Therefore, there is no guarantee that the problem can be solved algorithmically.
    
    As noted before, the operator $H$ is compact if and only if there is a sequence of finite rank operators $\{H^i\}_{i\geq0}$, with $H^i\to H$. Given such converging sequence, the key idea is to find the optimal approximation of rank $k$ for each of its element. This is possible because every $H^i$ is an operator having known finite rank, for which we know how to algorithmically solve the approximation problem. 
    It remains to ensure the continuity of the approximation: if $G_k$ and $G_k^i$ are the optimal approximations of $H$ and of $H^i$, respectively, we want $\{\mat{G}_k^i\}_{i\geq0}$ to be an asymptotic sequence for $\mat{G}_k$, so that $G_k^i \to G_k$.
    This problem has been analyzed, for signal processing, in the fundamental work of~\citet{chuisystem_red} and \citet{chui_cont}. We recall the following result.
    \begin{theorem}[\citet{chui_cont}]\label{theorem:convergence}
        Let $H$ be a bounded Hankel operator, $\{\sigma_i\}_{i\geq0}$, its singular numbers. Suppose to have a sequence $\{H^i\}_{i\geq0}$ of bounded Hankel operators such that $H^i \to H$. Let $G_k$ and $G_k^i$ be the unique optimal approximations of rank $k$ of $H$ and of $H^i$ for any $i$, respectively. If $\sigma_{k-1} \neq \sigma_k$, then the sequence $\{G_k^i\}_i\geq0$ converges to $G_k$.
    \end{theorem}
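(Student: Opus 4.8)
The plan is to combine a soft compactness-and-uniqueness argument with the stability of singular numbers under operator-norm perturbation, using the gap hypothesis $\sigma_{k-1}>\sigma_k$ only to keep the relevant spectral data from degenerating. First I would record the elementary but essential fact that singular numbers are $1$-Lipschitz in the operator norm: writing $\sigma_j(T)=\min\{\norm{T-F}_{op}:\rank F\le j\}$, one gets $|\sigma_j(H^i)-\sigma_j(H)|\le\norm{H^i-H}_{op}$, so $\sigma_j(H^i)\to\sigma_j(H)$ for every $j$. Two consequences follow. The approximation errors converge, $\norm{H^i-G_k^i}_{op}=\sigma_k(H^i)\to\sigma_k(H)=\norm{H-G_k}_{op}$ by Theorem~\ref{theorem:aakop}; and, since $\sigma_{k-1}>\sigma_k$, for all large $i$ we have $\sigma_{k-1}(H^i)>\sigma_k(H^i)$, so each $G_k^i$ is well defined as the unique optimal rank-$k$ Hankel approximation (Theorem~\ref{theorem:aakop}).

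The heart of the proof is to upgrade this to operator-norm convergence of the approximants themselves, and I would argue by subsequences. Suppose, for contradiction, that $G_k^i\not\to G_k$; then along some subsequence $\norm{G_k^i-G_k}_{op}\ge\epsilon>0$. The key step is to extract from this subsequence a further subsequence converging in operator norm to some Hankel operator $G_*$. This is the one place where infinite-dimensionality bites: a bounded set of finite-rank operators need not be norm-precompact. I would obtain precompactness from the Kronecker representation (Theorem~\ref{theorem:Kronecker}): each $G_k^i$ has rank at most $k$, hence a strictly proper rational symbol with at most $k$ poles in $\mathbb{D}$; it then suffices to show that, for large $i$, these poles lie in a fixed compact subset of $\mathbb{D}$ and the associated residues stay bounded, after which the symbols live in an essentially finite-dimensional set and a convergent subsequence exists. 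The uniform separation of the poles from $\mathbb{T}$ is exactly what the gap buys us: the gap $\sigma_{k-1}>\sigma_k$ isolates the top-$k$ singular subspace of $H$ from the rest, so the corresponding Riesz spectral projection of $(H^i)^*H^i$ is continuous in $i$; a pole of $G_k^i$ drifting towards the unit circle would force $\sigma_{k-1}(H^i)$ either to blow up or to collapse, contradicting $\sigma_{k-1}(H^i)\to\sigma_{k-1}\in(0,\infty)$.

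Granting such a limit $G_*$, the conclusion is soft. The Hankel property and the bound $\rank\le k$ are both preserved under operator-norm limits---the latter because $\sigma_k(G_*)=\lim\sigma_k(G_k^i)=0$ by the Lipschitz estimate---so $G_*$ is a Hankel operator of rank at most $k$. Moreover $H^i\to H$ and $G_k^i\to G_*$ give $\norm{H-G_*}_{op}=\lim\norm{H^i-G_k^i}_{op}=\sigma_k$, so $G_*$ attains the optimal error. By the uniqueness clause of Theorem~\ref{theorem:aakop}, $G_*=G_k$, contradicting $\norm{G_k^i-G_k}_{op}\ge\epsilon$ along the subsequence. Hence every subsequence of $\{G_k^i\}$ has a further subsequence converging to $G_k$, which forces $G_k^i\to G_k$.

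I expect the precompactness/pole-control step to be the main obstacle, since it is the only genuinely infinite-dimensional ingredient and the only one that really uses the spectral gap. It is worth noting that only the upper gap $\sigma_{k-1}>\sigma_k$ is needed: the explicit symbol of $G_k$ in Theorem~\ref{toeplitz:unimodular} is built from a $\sigma_k$-Schmidt pair, and if $\sigma_k=\sigma_{k+1}$ this pair is not unique and the individual Schmidt vectors $\boldsymbol{\xi}_k^i$ need not converge; nevertheless the \emph{operator} $G_k$ is unique by AAK, and the subsequence-plus-uniqueness argument above sidesteps any need to track individual singular vectors, which is precisely why it succeeds without assuming $\sigma_k\neq\sigma_{k+1}$.
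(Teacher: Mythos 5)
A preliminary remark: the paper does not prove this statement at all — it is imported verbatim from \citet{chui_cont} — so your proposal can only be judged on its own merits, not against an in-paper argument.

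Your soft steps are fine: the Lipschitz continuity of singular numbers, persistence of the gap for large $i$, preservation of Hankelness and of $\rank\le k$ under norm limits, and the subsequence-plus-uniqueness scheme. The genuine gap is exactly the step you flag as the main obstacle, and the mechanism you propose for it does not survive scrutiny. First, the claimed contradiction has no force: by your own Lipschitz estimate, $\sigma_{k-1}(H^i)\to\sigma_{k-1}(H)$ holds \emph{automatically}, whatever the poles of $G_k^i$ do, and you never establish any implication from pole locations of $G_k^i$ to singular numbers of $H^i$; there is nothing for a drifting pole to contradict. And drifting poles are a real threat: the rank-one Hankel operators with symbol $(1-|a|^2)/(z-a)$ have norm one for every $|a|<1$ and admit no norm-convergent subsequence as $|a|\to 1$, so bounded norm plus bounded rank genuinely fails to give norm precompactness. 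Second, your proposed tool is mismatched to the object: by Theorem~\ref{toeplitz:unimodular}, the poles of $G_k^i$ are the zeros in $\mathbb{D}$ of the $\sigma_k$-Schmidt function of $H^i$ (the denominator $\mat{z}^\top\boldsymbol{\xi}$), not anything attached to the top-$k$ singular subspace; controlling them via Riesz projections of $(H^i)^*H^i$ would require $\sigma_k$ to be isolated from \emph{below} as well, i.e.\ $\sigma_k>\sigma_{k+1}$ — a hypothesis the theorem does not grant and which your closing paragraph explicitly (and correctly) says you want to avoid. So the precompactness step, as sketched, either rests on a non sequitur or on an assumption you disclaim.

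The role the gap $\sigma_{k-1}\neq\sigma_k$ actually plays is different, and seeing it suggests the repair. Work in the weak operator topology, where compactness is free: some subsequence of $\{G_k^i\}$ converges WOT to a limit $G_*$, which is Hankel (an entrywise condition) of rank at most $k$ (all $(k{+}1)\times(k{+}1)$ minors vanish in the limit), and lower semicontinuity of the norm under WOT limits gives $\norm{H-G_*}\le\liminf\norm{H^i-G_k^i}=\sigma_k$; uniqueness in Theorem~\ref{theorem:aakop} then yields $G_*=G_k$. Note this proves WOT convergence \emph{without the gap}. The gap enters when upgrading to norm convergence, and its correct use is to forbid rank collapse: if poles of $G_k^i$ escaped to $\mathbb{T}$ along a subsequence, the escaping part would tend to zero weakly, so the WOT limit would have rank at most $k-1$, and Theorem~\ref{thm:eckart} would force $\norm{H-G_*}\ge\sigma_{k-1}>\sigma_k$, contradicting the bound above. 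Even after that, one still needs to convert WOT convergence plus convergence of the error norms $\norm{H^i-G_k^i}\to\sigma_k$ into norm convergence, which is false for general compact operators (take $E^i=e_0\otimes e_0+e_i\otimes e_i$, $E=e_0\otimes e_0$) and requires the AAK structure of the error operators, namely that their symbols have constant modulus $\sigma_k(H^i)$ on $\mathbb{T}$. That analytic step is the substance of \citet{chui_cont} and is absent from your sketch; without it the proof is incomplete.
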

    This theorem gives us the conditions under which we can solve the approximation problem (at least asymptotically) for the matrix $\H(i,j)=f(i+j)$ of the LM-RNN. 
    
    The first step is to find a converging sequence of operators (or, as seen in Remark~\ref{remark}, a converging sequence of matrices). We can define one by truncation: let $t\geq0$, we consider the sequence of matrices defined as:
    \begin{equation}\label{eq:trunc}
        \H^t(i,j)=\begin{cases}
            f(i+j) &\text{ if } i+j\leq t\\
            0 &\text{otherwise}
        \end{cases}
    \end{equation}
    (see Equation~\ref{eq:matrix} for an example).
    We have the following theorem:
    \begin{theorem}~\label{thm:wiener}
        Let $|\Sigma|=1$. Let $f:\N \rightarrow \R$ be the function computed by a black box for language modelling, $\H$ the Hankel matrix. Let $\{\H^t\}_{t\geq0}$ as in Equation~\ref{eq:trunc}. Then, since $f\in\ell^1$, we have that $H^t\to H$.
    \end{theorem}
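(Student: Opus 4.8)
The plan is to show directly that $\norm{H-H^t}_{op}\to 0$; since each $\H^t$ has only finitely many nonzero entries, every $H^t$ is a finite-rank operator, so this convergence will simultaneously witness the compactness of $H$ that later sections require. The starting observation is that the error $\H-\H^t$ is again a Hankel matrix: its $(i,j)$ entry equals $f(i+j)$ when $i+j>t$ and $0$ otherwise, so it depends only on $i+j$ and is generated by the tail function $f_t:\N\to\R$ given by $f_t(n)=f(n)$ for $n>t$ and $f_t(n)=0$ for $n\leq t$. Hence $H-H^t$ is precisely the Hankel operator associated with $f_t$, and by Remark~\ref{remark} its operator norm coincides with the spectral norm $\norm{\H-\H^t}$.

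The naive route of bounding the spectral norm by the Hilbert--Schmidt norm will not suffice: summing $|f(i+j)|^2$ over $i+j>t$ counts each value $f(n)$ with multiplicity $n+1$, so it tends to $0$ only under the stronger hypothesis $\sum_n (n+1)|f(n)|^2<\infty$. Instead I would pass to the Hardy-space picture of Remark~\ref{remark1} and estimate the operator norm through a symbol. Using the identification $f(n)=\widehat{\phi}(-n-1)$, the tail operator $H-H^t$ admits the symbol
\[
\psi_t(z)=\sum_{n>t} f(n)\,\bar z^{\,n+1}\in\mathcal{H}^2_-,
\]
so that, by Definition~\ref{Hankel2}, $(H-H^t)g=\mathbb{P}_-(\psi_t\, g)$. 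Since $\mathbb{P}_-$ is an orthogonal projection, hence of norm at most $1$, and multiplication by $\psi_t$ has operator norm $\norm{\psi_t}_{\mathcal{L}^\infty(\mathbb{T})}$, this yields the bound $\norm{H-H^t}_{op}\leq \norm{\psi_t}_{\mathcal{L}^\infty(\mathbb{T})}$.

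It then remains to control the sup norm of the symbol. Because $|\bar z|=1$ on $\mathbb{T}$, the triangle inequality applied to the absolutely convergent Fourier series gives
\[
\norm{\psi_t}_{\mathcal{L}^\infty(\mathbb{T})}\leq \sum_{n>t}|f(n)|,
\]
which is exactly the Wiener-algebra estimate and is where the hypothesis $f\in\ell^1$ is essential: the right-hand side is the tail of a convergent series and therefore tends to $0$ as $t\to\infty$. Combining the two displays gives $\norm{H-H^t}_{op}\leq\sum_{n>t}|f(n)|\to 0$, i.e.\ $H^t\to H$. The main obstacle is precisely the gap between the easy entrywise bound and what the hypothesis affords; overcoming it requires recognizing the error as a Hankel operator and estimating its norm by an $\mathcal{L}^\infty$ symbol rather than entry by entry, so the only delicate points are justifying the symbol identity for $\psi_t$ via Remark~\ref{remark1} and the projection estimate $\norm{\mathbb{P}_-}\leq 1$ within the conventions of Definition~\ref{Hankel2}.
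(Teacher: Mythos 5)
Your proposal is correct and takes essentially the same route as the paper: both identify $H-H^t$ as the Hankel operator with tail symbol $\sum_{n>t}f(n)z^{-n-1}$, bound its operator norm by the $\mathcal{L}^\infty(\mathbb{T})$ norm of that symbol, and then by the tail sum $\sum_{n>t}|f(n)|$, which vanishes as $t\to\infty$ since $f\in\ell^1$. The only difference is presentational: the paper gets the first inequality by citing Nehari's theorem (Theorem~\ref{thm:nehari}), whereas you derive it inline from $\norm{\mathbb{P}_-}\leq 1$ and the multiplication-operator bound, i.e.\ you reprove exactly the (easy) direction of Nehari that the paper invokes.
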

    \begin{proof}
        Let $f:\N \rightarrow \R$ be the function computed by the black box. We have:
        \begin{equation}\label{eq:tailbound}
            \norm{H - H^t}\leq \left\lVert \sum_{i=0}^{\infty}f(i)z^{-i-1} - \sum_{i=0}^{t} f(i)z^{-i-1} \right\rVert_{\infty} \leq \left\lVert \sum_{i=t+1}^{\infty} f(i)z^{-i-1}\right\rVert_{\infty} \leq \sum_{i=t+1}^{\infty} |f(i)|
        \end{equation}
        where the first inequality follows from Theorem~\ref{thm:nehari}. Since the black box is trained for language modelling, we have that $\sum_{k\geq 0}|f(k)|=1$. Thus, $f\in \ell^1$, and it follows directly that $H^t\to H$.
    \end{proof}
    We remark that the proof relies only on $f \in \ell^1$. Note that we have found a sequence of finite rank operators converging to $H$, therefore $\H$ is compact.
    
    The second step is to ensure that the property $\sigma_{k-1}\neq\sigma_{k}$ on the singular numbers of $H$ holds when $k$ is the size of the best approximation. This condition cannot be tested experimentally, since we don't have access to the infinite Hankel matrix $\H$. Instead, we can address the problem by using arguments from random matrix theory. In fact, up to at worst a small perturbation, we can view any $\H^t$ for $t>0$ as a random matrix having only simple singular values with probability one, and this property holds (in limit) also for $\H$~\citep{vonNeumann,tao-random}. Note that compact operators have simple spectrum after arbitrarily small perturbations, which do not have a big effect on the quality of the result since the spectrum of symmetric matrices is very stable~\citep{Hormander,kato,taobook}. In practice, for most settings the Hankel matrix $\H$ will satisfy the condition of Theorem~\ref{theorem:convergence} with probability one. This is the case, for example, of RNNs trained using a gradient based method with a random initialization. On the other hand, since we want to keep our analysis as general as possible, we also need to consider an adversarial setting, in which the black box to approximate is specifically chosen to have $\sigma_{k-1}=\sigma_k$. To avoid this kind of situation we can add some random noise to the matrix $\H$ post training. To preserve compactness, it is important to choose the Hankel matrix of noise $\mat{N}$ appropriately. For instance, $\mat{N}$ can be a Hankel matrix, with first row $\mat{N}(0,j)$ sampled uniformly in the interval $\left[-(j+2)^{-p},(j+2)^{-p}\right]$, with $p\geq2$ fixed, so that the operator $N$ is compact. Moreover, for every $\varepsilon>0$, we can find an exponent $p\geq2$ such that $\norm{\mat{N}}\leq\varepsilon$, so the perturbation can be chosen to be arbitrarily small. Note that $\H+\mat{N}$ is then a random matrix corresponding to a compact Hankel operator, and satisfies the conditions of Theorem~\ref{theorem:convergence} with probability one. 
    We will address the additional error due to small perturbations in Section~\ref{sec:error}.
    
    We are finally ready to show that if $\H^n$ belongs to the sequence of bi-infinite truncation matrices $\{\H^t\}_{t\geq0}$ introduced in Equation~\ref{eq:trunc}, an asymptotically-optimal $(n,k)$-approximation can be found by solving the problem described by Theorem~\ref{theorem:aakop} for $\H^n$. 
    \begin{theorem}\label{thm:nearopt}
        Let $\H$ and $\H^n$ be as above, and assume $\sigma_k\neq\sigma_{k-1}$. If $\mat{G}_k^n$ is the optimal approximation of $\H^n$ according to Theorem~\ref{theorem:aakop}, then a WFA having Hankel matrix $\mat{G}_k^n$ is an asymptotically-optimal $(n,k)$-approximation, and we have:
        \begin{equation}\label{eq:error}
            \sigma_k \leq \norm{\H-\mat{G}_k^n} \leq \sigma_k+ 2\left(1-\sum_{i=0}^{n}f(i)\right).
        \end{equation}
    \end{theorem}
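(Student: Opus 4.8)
The plan is to split the statement into two independent parts — the asymptotic-optimality claim and the two-sided error bound — and to handle the error bound with a single application of the triangle inequality against the truncation $\H^n$.

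For the first part, I would begin by observing that $\mat{G}_k^n$ is a Hankel matrix of rank $k$, since it is the optimal rank-$k$ approximation produced by Theorem~\ref{theorem:aakop} applied to $\H^n$; hence by Theorem~\ref{theorem:Kronecker} (equivalently Theorem~\ref{fliess}) it is the Hankel matrix of a WFA with exactly $k$ states, which the spectral method of Equation~\ref{eq:spectralmethod} recovers explicitly. To show that this WFA is an \emph{asymptotically-optimal} $(n,k)$-approximation, I would invoke the machinery already assembled: Theorem~\ref{thm:wiener} gives $H^t \to H$ (using only $f \in \ell^1$), and the hypothesis $\sigma_{k-1} \neq \sigma_k$ lets me apply Theorem~\ref{theorem:convergence} to conclude that the sequence $\{G^t_k\}_{t\geq 0}$ of optimal rank-$k$ approximations converges to $G_k$. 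Thus $\{\mat{G}^t_k\}_{t \geq 0}$ is an asymptotic sequence for $\mat{G}_k$, and since $\mat{G}^n_k$ is one of its members, the associated WFA is by definition an asymptotically-optimal $(n,k)$-approximation.

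For the error bound, the lower inequality $\sigma_k \leq \norm{\H - \mat{G}^n_k}$ is immediate from the variational description of singular numbers: $\mat{G}^n_k$ has rank $k$, and $\sigma_k$ is the least spectral-norm distance from $\H$ to any rank-$k$ operator (the compact-operator version of Theorem~\ref{thm:eckart}, valid even when $\H$ has infinite rank). For the upper bound I would write
\begin{equation*}
    \norm{\H - \mat{G}^n_k} \leq \norm{\H - \H^n} + \norm{\H^n - \mat{G}^n_k},
\end{equation*}
identify $\norm{\H^n - \mat{G}^n_k} = \sigma_k^{(n)}$ as the $k$-th singular number of the finite-rank $\H^n$ (Theorem~\ref{theorem:aakop} applied to $\H^n$), and control it by a Weyl-type perturbation bound $|\sigma_k^{(n)} - \sigma_k| \leq \norm{\H^n - \H}$. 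Combining yields $\norm{\H - \mat{G}^n_k} \leq \sigma_k + 2\norm{\H - \H^n}$, and the tail estimate from the proof of Theorem~\ref{thm:wiener}, together with $\sum_{i\geq 0} f(i) = 1$, gives $\norm{\H - \H^n} \leq \sum_{i=n+1}^\infty f(i) = 1 - \sum_{i=0}^n f(i)$, closing the bound.

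The step that needs the most care is the interplay between the singular numbers of the infinite-rank operator $H$ and those of its finite-rank truncation $H^n$: Theorem~\ref{theorem:aakop} and Theorem~\ref{thm:eckart} are literally stated for Hankel matrices of \emph{finite} rank, so both the lower bound on $\norm{\H - \mat{G}^n_k}$ and the Weyl inequality $|\sigma_k^{(n)} - \sigma_k| \leq \norm{\H^n - \H}$ must be justified through the Courant--Fischer (min--max) characterization of singular numbers of compact operators rather than through the quoted finite-rank statements. Once that characterization is in hand, both estimates are routine, and the only remaining point is the harmless bookkeeping that $f$, being a language model, is nonnegative and sums to one, so that the tail $\sum_{i>n} |f(i)|$ equals $1 - \sum_{i=0}^n f(i)$.
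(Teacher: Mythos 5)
Your proposal is correct and follows essentially the same route as the paper's own proof: the same triangle inequality against $\H^n$, the identification $\norm{\H^n-\mat{G}_k^n}=\sigma_k^n$ via Theorem~\ref{theorem:aakop}, the Weyl-type perturbation bound on singular numbers (the paper's Lemma~\ref{lemma:singnumb}, which the paper supplements with Cauchy interlacing), the Eckart-type lower bound, and Theorem~\ref{thm:wiener} together with Theorem~\ref{theorem:convergence} for the asymptotic-optimality claim, closed by the same tail estimate $\norm{\H-\H^n}\leq 1-\sum_{i=0}^{n}f(i)$. Your explicit remark that Theorems~\ref{thm:eckart} and~\ref{theorem:aakop} are stated for finite rank and must be justified for the compact operator $H$ via the min--max characterization of singular numbers is a careful refinement of a point the paper passes over silently, not a genuinely different argument.
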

    \begin{proof}
        Let $\sigma^n_k$ be the singular number $k+1$ of the operator $H^n$, and let $\mat{G}_k^n$ be the optimal approximation described by Theorem~\ref{theorem:aakop}, \emph{i.e.} $\norm{\H^n-\mat{G}_k^n}=\sigma^n_k$. We have:
        \begin{equation}\notag
            \norm{\H-\mat{G}_k^n} \leq \norm{\H-\H^n}+\norm{\H^n-\mat{G}_k^n}= \norm{\H-\H^n}+\sigma^n_k.
        \end{equation}
        From Theorem~\ref{thm:eckart} we know that $\norm{\H-\mat{G}_k^n}\geq \sigma^n_k$. On the other hand, using Lemma~\ref{lemma:singnumb} and Cauchy's interlace theorem~\citep{cauchy}, we obtain $\sigma^n_k \leq \sigma_k + \norm{\H-\H^n}$. It follows that:
        \begin{equation}
            \sigma_k \leq \norm{\H-\mat{G}_k^n} \leq \sigma_k+ 2\norm{\H-\H^n}.
        \end{equation}
        Now, $\H^n$ belongs to the sequence of truncating matrices $\{\H^t\}_{t\geq0}$, and $H^t\to H$ (Theorem~\ref{thm:wiener}).
        Since $\sigma_k\neq\sigma_{k-1}$, the conditions of Theorem~\ref{theorem:convergence} hold. Therefore, the sequence of matrices of best approximations $\{\mat{G}_k^t\}_{t\geq0}$ is an asymptotic sequence for $\mat{G}_k$, and $\mat{G}_n^k$ belongs to it. Thus, the WFA having matrix $\mat{G}_n^k$ is an asymptotically-optimal $(n,k)$-approximation, and Equation~\ref{eq:asympt} holds.
        Moreover, from Equation~\ref{eq:tailbound}, we have:   
        \begin{equation}\label{eq:probbound}
        \norm{\H-\mat{G}_k^n}\leq \sigma_k+ 2\norm{\H-\H^n}\leq \sigma_k+ 2\sum_{i=n+1}^{\infty}f(i)=\sigma_k+ 2\left(1-\sum_{i=0}^{n}f(i)\right).
        \end{equation}
    \end{proof}
    The bound clearly shows that, as $n$ increases, we approach the optimal approximation.
    
\subsection{Algorithm}
    
    When the rank $r$ of $\H$ is finite and known, it is possible to find directly the optimal approximation. This can be done by first extracting a WFA of size $r$ from the LM-RNN~\citep{Ayache2018}, and then applying the algorithm of~\citet{AAK-WFA} to obtain the unique optimal WFA. Therefore, in our algorithm we focus on the case in which the rank $r$ is unknown, and look for an asymptotically optimal approximation. This entails assuming that the truncation $\H^n$ has full rank: if this was not the case, since $\H^n$ is the leading principal submatrix of $\H$, we would have $r=\operatorname{rank}(\H^n)$~\citep{minor}.
  
    To simplify the notation across this section, we set $f_i=f(i)$. We recall the two bi-infinite matrices necessary to find the best approximation:
    \begin{equation}\label{eq:matrix}
        \H^n=    \begin{pmatrix} f_0 & f_1 & \dots & f_{n-1} & 0&\dots\\
                               f_1 &  &\iddots & \iddots & \vdots&\\
                               \vdots &\iddots & \iddots& &\vdots&\\
                               f_{n-1} & \iddots & & &\vdots&\\
                               0 & \dots & \dots &\dots & 0&\\
                               \vdots&&&&&\ddots
            \end{pmatrix}, \quad \quad
        \mat{T}=\begin{pmatrix} 0 & f_0 & \dots & f_{n-1} & 0&\dots\\
                               \vdots & \ddots &\ddots &  & f_{n-1}&\\
                               \vdots &&\ddots& \ddots&\vdots&\\
                               \vdots & & & \ddots&f_0&\\
                               0 & \dots & \dots &\dots & 0&\\
                               \vdots&&&&&\ddots
            \end{pmatrix}.
    \end{equation} 
    The key to successfully implement Theorem~\ref{theorem:aakop}, which applies only to infinite matrices, is in the definition of the truncation.
    In fact, this allows us to discard the zero-part and work only with the $n\times n$ sub-block of $\H^n$, which we will still denote with $\H^n$ for the sake of simplicity.
    Analogously, if $\mat{z}$ and $\mat{T}$ are the infinite vector and matrix defined in Theorem~\ref{toeplitz:unimodular} for $\H$, in the algorithm we will consider the truncations associated to $\H^n$:
     \begin{equation}\label{eq:trunc_alg}
        \mat{z}^n(i)=\mat{z}(i), \quad \mat{T}^n(i,j)=\mat{T}(i,j)\quad \text{for }i,j <n,\quad\quad \mat{z}^n\in \R^{n\times 1}, \mat{T}^n\in \R^{n\times n}
    \end{equation}
    where the discarded entries are irrelevant, being multiplied by zeros in the infinite case.

    We can finally analyze the building blocks of Algorithm~\ref{alg:approx}.

    \begin{algorithm}[t]
    \caption{\texttt{AAKmethod}}\label{alg:approx}
        \SetAlgoVlined
        \DontPrintSemicolon
        \SetKwInOut{Input}{input}
        \SetKwInOut{Output}{output}
        \Input{A trained LM-RNN $\mathcal{M}$ of unknown rank, a target number of states $k$ \newline the size of the truncation $n>k$, a perturbation matrix $\mat{N}^n$ as in Section~\ref{sec:compactness}}
        \Output{A WFA $\widehat{A}_k^n$ of size $k$}
        Let $\widetilde{\H}^n \leftarrow$ \texttt{GetHankel($\mathcal{M},n,\mat{N}^n$)}\;
        Let $\sigma^n_k$, $\boldsymbol{\xi}^n \leftarrow$ \texttt{ComputeEigenpair($\widetilde{\H}^n$)}\;
        Let $\mat{T}^n$, $\mat{z}^n$ defined as in Equation~\ref{eq:trunc_alg}\;
        Let $\psi(z)=\frac{(\mat{z}^n)^{\top}\mat{T}^n\boldsymbol{\xi}^n}{(\mat{z}^n)^{\top}\boldsymbol{\xi}^n}$\;
        Let $r(z) \leftarrow$ \texttt{ExtractRational($\psi(z)$)}\;
        Let $\mat{G}^n_k \leftarrow$ \texttt{RecoverMatrix($r(z)$,$k+1$)}\;
        Let $\widehat{A}_k^n \leftarrow$ \texttt{SpectralMethod($\mat{G}^n_k$,$\mathcal{B}$)}\;
        \Return{$\widehat{A}_k^n$} 
    \end{algorithm}

\paragraph*{Filling the Matrix}

    Following~\citet{Ayache2018}, we consider a trained LM-RNN, and use it to fill the entries of a Hankel matrix $\H^n$. We obtain a $n\times n$ Hankel matrix $\H^n$, having entries $f_n$ on the first $n$ anti-diagonals, and zeroes everywhere else. As mentioned in Section~\ref{sec:compactness}, we add a perturbation to $\H^n$, \emph{i.e.} a random Hankel matrix of noise $\mat{N}^n$, which can be set to zero when the singular numbers $\sigma_k$ and $\sigma_{k-1}$ of $\H$ are known to be distinct. The output of \texttt{GetHankel} is the perturbed matrix $\widetilde{\H}^n=\H^n+\mat{N}^n$.
  
\paragraph*{Computing a Schmidt Pair}
    
    The function \texttt{ComputeEigenpair} returns the singular number $\sigma^n_k$ of $\widetilde{\H}^n$, and a corresponding singular vector.
    Since $\widetilde{\H}^n$ has finite rank and is symmetric, its singular numbers are the absolute values of the corresponding eigenvalues, \emph{i.e.} $\sigma^n_{k}=|\lambda_{k}|$. Analogously, given the eigenvalue $\lambda_k$ and a corresponding eigenvector $\mat{v}_k^n$, a Schmidt pair is given by $(\boldsymbol{\xi}^n,\boldsymbol{\eta}^n)$, with $\boldsymbol{\xi}^n=\mat{v}_k^n$, $\boldsymbol{\eta}^n=\operatorname{sgn}(\lambda_k)\mat{v}_k^n$, and $\operatorname{sgn}(\lambda_k)=\lambda_k/|\lambda_k|$. 
    
\paragraph*{Rational function} 
    
    From Theorem~\ref{theorem:Kronecker} we know that finite rank Hankel matrices correspond to strictly proper rational functions, with all the poles inside the complex unit disc. In order to find the best approximation, we apply Equation~\ref{eq:toeplitz} from Theorem~\ref{toeplitz:unimodular}, and obtain, before applying the projection, a function $\psi(z)=\frac{a(z)}{b(z)}$. Note that we are interested in keeping only $r(z)=\mathbb{P}_-\psi(z)$, as $\psi(z)$ might contain poles outside the unit disc. Since the poles of $\psi(z)$ correspond to the zeros of $b(z)$, we can isolate the part of the function with poles inside the unit disc using partial fraction decomposition. This method allows us to rewrite the rational function $\psi(z)=\frac{a(z)}{b(z)}$ as:
    \begin{equation}
        \psi(z)=\frac{a(z)}{b(z)}=c(z)+\sum_i\frac{a_i(z)}{b_i(z)},
    \end{equation}
    where each $\frac{a_i(z)}{b_i(z)}$ is a strictly proper rational function, and each factor $b_i$ of the denominator is a power of an irreducible polynomial. Now, we analyze the zero of each $b_i$: if it is outside or on the complex unit disc, then we discard the term $\frac{a_i(z)}{b_i(z)}$. The output of \texttt{ExtractRational} is the sum of the remaining terms, corresponding to the component in $\mathcal{H}^2_-$ of $\psi(z)$. We remark that the partial fraction decomposition can be computed efficiently, with the naive implementation having complexity $O(n^3)$ for a fraction with $n$ poles~\citep{partial_fractions}.

\paragraph*{Recovering the Matrix}
 
    In the previous step we have obtained a strictly proper rational function:
    \begin{equation}
        r(z)=\frac{p(z)}{q(z)}, \qquad \text{ where} \quad p(z)=\sum_{i=1}^k p_iz^{k-i}, \quad q(z)=z^k+\sum_{i=1}^{k} q_{i}z^{k-i},
    \end{equation}
    and $p(z)$ and $q(z)$ are relatively prime, with $q(z)$ having degree $k$.
    As seen in subsection~\ref{subsection:AAK}, if $r(z)=\sum_{n\geq 0}g_n z^{-n-1}$, then $\mat{G}^n_k(j,k) = g_{j+k}$. The coefficients $g_i$ of the Hankel matrix can be recovered from the following set of equations, obtained from the constructive proof of Theorem~\ref{theorem:Kronecker}~\citep{discreteH}:
    \begin{equation}
    \begin{cases}
        g_0 = p_1 \\
        g_1 = p_2 - g_0q_1\\
        \dots \\
        g_{k-1}= p_k-g_{k-2}q_1 - \dots - g_0q_{k-1}
    \end{cases} \quad\quad
    \begin{cases}
        g_{k} + \sum_{i=1}^k q_ig_{k-i}=0 \\
        g_{k+1} + \sum_{i=1}^k q_ig_{k+1-i}=0\\
        \dots
    \end{cases}.
    \end{equation}
    These equations form a linear system, which can be easily solved to derive the matrix $\mat{G}^n_k$ of rank $k$ having entries $\mat{G}^n_k(i,j)=g_{i+j}$. Note that to extract a WFA using the spectral method we don't actually need to compute all the coefficients of $\mat{G}$. In fact, we will show in the next paragraph that the first $k+1$ coefficients are enough to retrieve the WFA.

\paragraph*{Extracting the WFA}

    We can finally recover the minimal WFA $\widehat{A}_k^n=\wa$ with $k$ states computing the function $g: \Sigma^* \rightarrow \R$ such that $\mat{G}^n_k(i,j)=g_{i+j}$. 
    We use the spectral method outlined in subsection~\ref{subsection:WFA}. The key point of the algorithm is to select a prefix-closed and complete basis $\mathcal{B}$. As noted before, since we are working with a one-letter alphabet, the Hankel matrix $\mat{G}$ is symmetric. In this case, if $\mat{G}^n_k$ has rank $k$, then the size of the biggest leading principal submatrix is $k\times k$~\citep{minor}. Consequently, the natural choice for $\mathcal{B}=(\mathcal{P},\mathcal{S})$ is to have $\mathcal{P}=\mathcal{S}$, with $\mathcal{P}$ containing all the strings having size strictly smaller than $k$. Following the notation of Section~\ref{subsection:WFA}, $\H_{\varepsilon}$ corresponds to the $k \times k$ leading principal submatrix of $\mat{G}^n_k$, and $\mat{h}_{\mathcal{P},\varepsilon},\, \mat{h}_{\varepsilon,\mathcal{S}}$ are its first column and row, respectively. Finally, $\H_a$ is the sub-block of $\mat{G}^n_k$ having the same rows as $\H_{\varepsilon}$, and the columns obtained by shifting each individual column of $\H_{\varepsilon}$ by one column. Using Equation~\ref{eq:spectralmethod} we obtain the WFA $\widehat{A}_k^n$.

\section{Error and Convergence}
\label{sec:error}

    If the matrix of the LM-RNN has finite rank, the unique optimal approximation of size $k$ can be recovered, and the error, which can be computed using Gramian matrices, is given by $\sigma_k$~\citep{AAK-WFA}. Moreover, due to the ordering of the singular numbers, the error is guaranteed to decrease when the size of the approximation gets closer to the actual rank of the matrix. On the other hand, if the rank is not finite we can only recover an asymptotically-optimal solution, and a bound for the error. As seen in Theorem~\ref{thm:nearopt}, we can estimate how far we are from the optimal error $\sigma_k$:
    \begin{equation}
        \norm{\H-\mat{G}_k^n}\leq \sigma_k+ 2\left(1-\sum_{i=0}^{n}f(i)\right).
    \end{equation}
    We know that $f\in\ell^1$, so $f(n)\to 0$, meaning that ``little'' probability is allocated to very long strings. Thus, a direct way to reduce the error is to select the biggest possible $n$. An estimate for $\sigma_k$ in terms of $\sigma_k^n$ can be obtained using Lemma~\ref{lemma:singnumb} in Appendix~\ref{apd:first}:
    \begin{equation}
        |\sigma_k-\sigma_k^n|\leq 1-\sum_{i=0}^{n}f(i).
    \end{equation}
    An alternative way to reduce the error when additional information is available is to explore other types of truncations, to try to improve the convergence rate~\citep{chui_cont}. 
    
    If a matrix of noise $\mat{N}$ is added to $\H$ (see Section~\ref{sec:compactness}), we need to consider its effect on the error. Given the infinite matrix $\mat{N}$, we consider the matrix $\mat{N}^n$ obtained by truncation in a way analogous to Equation~\ref{eq:trunc}. We obtain the following bound. 
    
    \begin{theorem}\label{thm:noise}
        Let $\mat{N}^n$ be defined as above, and let $\widetilde{\mat{G}}_k^n$ be an asymptotically-optimal $(n,k)$-approximation of $\widetilde{\H}^n=\H^n+\mat{N}^n$. Then the error is bounded by:
    \begin{equation}
        \norm{\H-\widetilde{\mat{G}}_k^n}\leq \norm{\H-\mat{G}_k^n} + 2\norm{\mat{N}^n}.
    \end{equation}
    \end{theorem}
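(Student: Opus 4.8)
The plan is to show that replacing the clean truncation $\H^n$ by the perturbed matrix $\widetilde{\H}^n=\H^n+\mat{N}^n$ costs at most $2\norm{\mat{N}^n}$, by isolating the noise through two applications of the triangle inequality together with the optimality of the AAK approximation. The structure mirrors the proof of Theorem~\ref{thm:nearopt}, except that here the quantity to be absorbed is the perturbation rather than the truncation tail.

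First I would split the error at the noisy matrix $\widetilde{\H}^n$, writing
\[
\norm{\H-\widetilde{\mat{G}}_k^n}\le\norm{\H-\widetilde{\H}^n}+\norm{\widetilde{\H}^n-\widetilde{\mat{G}}_k^n}.
\]
The central step is then to control the residual $\norm{\widetilde{\H}^n-\widetilde{\mat{G}}_k^n}$ using optimality. Since $\widetilde{\mat{G}}_k^n$ is the optimal rank-$k$ approximation of $\widetilde{\H}^n$ furnished by Theorem~\ref{theorem:aakop}, its spectral residual equals the $(k+1)$-th singular number $\widetilde{\sigma}_k^n$ of $\widetilde{\H}^n$; by Theorem~\ref{thm:eckart} this is the smallest residual attainable by any rank-$k$ matrix, hence no larger than the residual of the clean approximation $\mat{G}_k^n$, which itself has rank $k$. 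This gives
\[
\norm{\widetilde{\H}^n-\widetilde{\mat{G}}_k^n}=\widetilde{\sigma}_k^n\le\norm{\widetilde{\H}^n-\mat{G}_k^n}.
\]

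Substituting and applying the triangle inequality a second time to return to $\H$ yields
\[
\norm{\H-\widetilde{\mat{G}}_k^n}\le\norm{\H-\widetilde{\H}^n}+\norm{\widetilde{\H}^n-\H}+\norm{\H-\mat{G}_k^n}=\norm{\H-\mat{G}_k^n}+2\norm{\H-\widetilde{\H}^n},
\]
so the proof reduces to identifying the deviation $\norm{\H-\widetilde{\H}^n}$ with the noise term $\norm{\mat{N}^n}$.

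I expect this last identification to be the main obstacle. Because $\widetilde{\H}^n=\H^n+\mat{N}^n$ is itself a truncation, the deviation $\H-\widetilde{\H}^n=(\H-\H^n)-\mat{N}^n$ also contains the truncation tail $\H-\H^n$, and a careless triangle inequality would re-introduce $\norm{\H-\H^n}$. The resolution is to compare $\mat{G}_k^n$ and $\widetilde{\mat{G}}_k^n$ relative to the common truncation level $n$, so that the only genuine discrepancy the two optimal approximations see is the additive noise $\mat{N}^n$, the truncation contribution being already accounted for by the clean error $\norm{\H-\mat{G}_k^n}$ through Theorem~\ref{thm:nearopt}. An alternative, and perhaps cleaner, route is to bound $\norm{\mat{G}_k^n-\widetilde{\mat{G}}_k^n}\le2\norm{\mat{N}^n}$ directly and then use $\norm{\H-\widetilde{\mat{G}}_k^n}\le\norm{\H-\mat{G}_k^n}+\norm{\mat{G}_k^n-\widetilde{\mat{G}}_k^n}$; this reduces the problem to the stability of the optimal rank-$k$ approximation under the perturbation $\mat{N}^n$, which combines the AAK characterization of the residuals with a Weyl-type perturbation bound $|\sigma_k^n-\widetilde{\sigma}_k^n|\le\norm{\mat{N}^n}$ and relies on the spectral-gap assumption $\sigma_{k-1}\neq\sigma_k$, guaranteeing via Theorems~\ref{theorem:aakop} and~\ref{theorem:convergence} that the optimal approximation is unique and varies continuously with the data.
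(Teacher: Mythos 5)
Your high-level strategy is the paper's own: split at $\widetilde{\H}^n$ by the triangle inequality, use the AAK identity $\norm{\widetilde{\H}^n-\widetilde{\mat{G}}_k^n}=\widetilde{\sigma}_k^n$, and absorb the noise through a perturbation bound on singular numbers. You also correctly isolate the genuine difficulty: $\H-\widetilde{\H}^n=(\H-\H^n)-\mat{N}^n$ contains the truncation tail, so your intermediate bound $\norm{\H-\mat{G}_k^n}+2\norm{\H-\widetilde{\H}^n}$ does not collapse to the stated one. But neither of your proposed resolutions closes this gap. The concrete fix --- and what the paper does --- is to route the comparison through $\H^n$ rather than through $\H$: your own optimality step gives $\widetilde{\sigma}_k^n\leq\norm{\widetilde{\H}^n-\mat{G}_k^n}\leq\norm{\mat{N}^n}+\sigma_k^n$ (equivalently, Lemma~\ref{lemma:singnumb} applied to the pair $\widetilde{\H}^n,\H^n$, whose difference is exactly $\mat{N}^n$), and then $\sigma_k^n\leq\sigma_k+\norm{\H-\H^n}$ (Lemma~\ref{lemma:singnumb} again, as in Theorem~\ref{thm:nearopt}). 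Hence
\begin{equation*}
\norm{\H-\widetilde{\mat{G}}_k^n}\leq\norm{\H-\H^n}+\norm{\mat{N}^n}+\widetilde{\sigma}_k^n\leq\sigma_k+2\norm{\H-\H^n}+2\norm{\mat{N}^n},
\end{equation*}
which is the right-hand side of Equation~\ref{eq:error} plus $2\norm{\mat{N}^n}$. Note that this is also all the paper's proof establishes: since $\norm{\H-\mat{G}_k^n}$ can be as small as $\sigma_k$ while $\norm{\H-\H^n}>0$, no triangle-inequality argument can yield the statement with the literal clean error on the right; the term $\norm{\H-\mat{G}_k^n}$ in the statement has to be read through its Theorem~\ref{thm:nearopt} bound. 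Your remark that the truncation contribution is ``already accounted for by Theorem~\ref{thm:nearopt}'' is precisely this reading, but in your write-up it remains a gesture rather than a derivation.

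Your alternative ``cleaner'' route is unsound: the claim $\norm{\mat{G}_k^n-\widetilde{\mat{G}}_k^n}\leq2\norm{\mat{N}^n}$ is false in general, even under the gap assumption. Optimal rank-$k$ approximations are not Lipschitz-stable with a universal constant: when $\sigma_{k-1}^n$ and $\sigma_k^n$ are at distance $\gamma>0$, a perturbation of norm of order $\gamma$ can exchange the corresponding singular subspaces and move the optimal rank-$k$ approximant by roughly $\sigma_{k-1}^n$, which can be arbitrarily larger than $\norm{\mat{N}^n}$. Already for truncated-SVD (Theorem~\ref{thm:eckart}) approximations this fails: the Hankel matrices $\mathrm{diag}(1,1-\gamma)$ and $\mathrm{diag}(1-2\gamma,1-\gamma)$ differ by a Hankel perturbation of norm $2\gamma$, yet their unique best rank-one approximations, $\mathrm{diag}(1,0)$ and $\mathrm{diag}(0,1-\gamma)$, are at distance $1$. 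The AAK approximant is built from a $\sigma_k$-Schmidt pair (Theorem~\ref{toeplitz:unimodular}) and exhibits the same sensitivity when the gap is small. Theorem~\ref{theorem:convergence} provides only qualitative continuity under the gap assumption, with no uniform modulus, so it cannot supply the constant $2$; the Weyl-type bound $|\sigma_k^n-\widetilde{\sigma}_k^n|\leq\norm{\mat{N}^n}$ controls singular values, not the approximants themselves.
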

    
    \begin{proof}
        Let $\widetilde{\mat{G}}_k^n$ and $\widetilde{\sigma}_k^n$ be the optimal approximation and the $(k+1)$-th singular number of $\H^n+\mat{N}^n$, respectively. From Theorem~\ref{theorem:aakop} we have:
        \begin{equation}\label{eq:er_noise}
            \norm{\H^n+\mat{N}^n-\widetilde{\mat{G}}_k^n}=\widetilde{\sigma}_k^n.
        \end{equation}
        Then:
        \begin{align*}
            \norm{\H-\widetilde{\mat{G}}_k^n} &\leq \norm{\H-\H^n-\mat{N}^n}+ \norm{\H^n+\mat{N}^n-\widetilde{\mat{G}}_k^n}\\
            & \leq \norm{\H-\H^n}+\norm{\mat{N}^n}+\widetilde{\sigma}_k^n\\
            & \leq \norm{\H-\H^n}+2\norm{\mat{N}^n}+\sigma_k^n\\
            & \leq \sigma_k+ 2\norm{\H-\H^n} + 2\norm{\mat{N}^n}
        \end{align*}
        where we used Equation~\ref{eq:er_noise} in the second step, and in the last two steps we applied Lemma~\ref{lemma:singnumb}, with:
        \begin{equation*}
            |\widetilde{\sigma}_k^n-\sigma_k^n| \leq \norm{\mat{N}^n},
        \end{equation*}
        and 
        \begin{equation*}
            |\sigma_k^n-\sigma_k| \leq \norm{\mat{H}-\H^n}.\qedhere
        \end{equation*}
    \end{proof}
    This means that the additional error depends only on the norm of the matrix of noise. We already remark that this can be chosen to be arbitrarily small, and since only a finite sub-block of $\mat{N}^n$ is different from zero, the norm can be precisely computed.
    
    Finally, as noted and proved by~\citet{AAK-WFA}, the $\ell^2$-norm is bounded by the spectral norm.
    
    \begin{theorem}\label{l2bound}
        Let $f:\N \rightarrow \R$, $\H$ and $\H^n$ as before. Let $\widehat{A}_k^n$ be an asymptotically-optimal $(n,k)$-approximation computing $g:\N \rightarrow \R$, with matrix $\mat{G}_k^n$. Then:
        \begin{equation}
            \norm{f-g}_{\ell^2} \leq \norm{\H - \mat{G}_k^n}.
        \end{equation}
    \end{theorem}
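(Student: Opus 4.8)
The plan is to exploit the fact that the difference $\H - \mat{G}_k^n$ is itself a Hankel matrix whose associated function, in the sense of Remark~\ref{remark1}, is exactly $f - g$. Indeed, since $\H(i,j) = f(i+j)$ and $\mat{G}_k^n(i,j) = g(i+j)$ for all $i,j \geq 0$, we have $(\H - \mat{G}_k^n)(i,j) = (f-g)(i+j)$. The left-hand side $\norm{f-g}_{\ell^2}$ is the $\ell^2$-norm of the sequence $(f(i)-g(i))_{i\geq 0}$, so the whole task reduces to extracting this sequence directly from the matrix and then comparing the two norms through the defining property of the operator norm.

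First I would read off the first column of $\H - \mat{G}_k^n$. Letting $\mat{e}_0 = (1, 0, 0, \dots)^{\top}$ denote the first canonical basis vector of $\ell^2$, the vector $(\H - \mat{G}_k^n)\mat{e}_0$ has $i$-th coordinate equal to $(\H - \mat{G}_k^n)(i,0) = (f-g)(i)$. Hence $(\H - \mat{G}_k^n)\mat{e}_0$ is precisely the sequence $f - g$ viewed as an element of $\ell^2$, and in particular $\norm{(\H - \mat{G}_k^n)\mat{e}_0}_{\ell^2} = \norm{f-g}_{\ell^2}$.

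The conclusion then follows immediately from the definition of the spectral (operator) norm. Since $\norm{\mat{e}_0}_{\ell^2} = 1$, the chain $\norm{f-g}_{\ell^2} = \norm{(\H - \mat{G}_k^n)\mat{e}_0}_{\ell^2} \leq \norm{\H - \mat{G}_k^n}\,\norm{\mat{e}_0}_{\ell^2} = \norm{\H - \mat{G}_k^n}$ gives exactly the claimed inequality.

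There is essentially no hard step here: the argument is a one-line application of the defining property of the operator norm to a well-chosen unit vector, and the role of Remark~\ref{remark} is merely to let us identify the spectral norm of the matrix with the operator norm that makes the bound available. The only point requiring a moment of care is the well-definedness of the left-hand side. One should note that $\H - \mat{G}_k^n$ is a bounded operator, being the difference of the compact operator $H$ (compactness is guaranteed by Theorem~\ref{thm:wiener}) and the finite-rank operator with matrix $\mat{G}_k^n$, so that its spectral norm is finite and $(\H - \mat{G}_k^n)\mat{e}_0$ genuinely lies in $\ell^2$; this in turn guarantees $f - g \in \ell^2$ and makes the asserted inequality meaningful.
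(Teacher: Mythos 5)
Your proof is correct and follows essentially the same route as the paper's: both extract the sequence $f-g$ as the first column $(\H-\mat{G}_k^n)\mat{e}_0$ of the difference matrix and then bound its $\ell^2$-norm by the spectral norm via the defining property of the operator norm applied to the unit vector $\mat{e}_0$. Your additional remark on boundedness (hence well-definedness of the left-hand side) is a sound refinement but does not change the argument.
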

    \begin{proof}
        Let $\mat{e}_0=\begin{pmatrix} 1 & 0 & \cdots \end{pmatrix}^{\top}$, $f:\N\rightarrow \R$, $g:\N \rightarrow \R$ with Hankel matrices $\H$ and $\mat{G}_k^n$, respectively. Let $\H^n$ be the truncation of $\H$ defined as before.
        
        We have:
        \begin{align*}
            \norm{f-g}_{\ell^2}=\left(\sum_{n=0}^{\infty}|f_n-g_n|^2 \right)^{1/2}&=\norm{(\H-\mat{G}_k^n) \mat{e}_0}_{\ell^2} \\
            &\leq \sup_{\norm{\mat{x}}_{\ell^2}=1}\norm{(\H-\mat{G}_k^n)\mat{x}}_{\ell^2} \\
            &\leq \norm{\H-\mat{G}_k^n}\\
            &\leq \sigma_k + 2\left(1-\sum_{i=0}^{n}f(i)\right).
        \end{align*}
        The second equation follows by definition and by observing that matrix difference is always computed entry-wise, while the last inequality is a consequence of Equation~\ref{eq:error}. 
    \end{proof}
    
    A point deserving further investigation is to understand how our approximation method performs with respect to more popular metrics (such as word error rate or normalized discounted cumulative gain). This could help evaluate how meaningful it is to use the spectral norm in an experimental setting, but the comparison is possible only for multi-letter alphabets. 

\section{Related Work}
   
    Several works in the literature analyze the relation between RNNs and WFAs. The work of~\citet{Rabusseau19} highlight a structural correspondence between WFAs and second order RNNs with linear activation function, showing that they are expressively equivalent. 
    \citet{WeissWFA19}, instead, propose a method to extract probabilistic deterministic finite automata from RNNs, based on conditional probabilities and on a local tolerance to compare observations. Analogously,~\citet{Takamasa} use spectral learning and regression methods to extract a WFA from a RNN trained on rational languages. \citet{Ayache2018} and~\citet{eyraud2020} propose a spectral algorithm to extract a WFA from a black box model for language modelling, without accessing the training samples. We remark that, while we focused on weighted automata, work in extraction has been done also in the context of deterministic finite automata, where the original RNN is used like a binary classifier~\citep{Giles,OmlinGiles,giles2,WeissDFA18}.
    
    The approximate minimization problem has been studied also for other types of models. For finite state machines,~\citet{Balle15,Balle19} and~\citet{ballerabusseau} present a technique based on the canonical expressions of weighted and weighted tree automata, respectively. \citet{AAK-WFA} use AAK theory to address the optimal spectral-norm approximate minimization problem for a large class of WFAs over a one-letter alphabet. The control theory community has studied this problem in the context of linear time-invariant systems~\citep{antoulas}. A first approximation algorithm is due to~\citet{Kung80,KungLin}, followed by state-space solutions from \citet{Glover} for the optimal continuous case. 
    Most of the results for the discrete case relies on stricter assumptions or are sub-optimal~\citep{gu,balldiscrete, Al-Hussari, Ionescu, discreteH,chuisystem_red}.
    A partial solution for the approximation problem of infinite-dimensional systems can be found in~\citet{Glover_infdim}. The extensive work of~\citet{chuisystem_red,chui_singvect,chui_cont}, that provides some of the theoretical results we used, analyzes the continuity of approximation and truncation methods in signal processing. 

\section{Conclusion}
    
    In this paper we studied the approximate minimization problem for black boxes trained for language modelling of sequential data over a one-letter alphabet. To solve this problem, we applied the AAK theory for Hankel operators~\citep{AAK71} and continuity results from the control theory literature~\citep{chui_cont,chuisystem_red}. This allowed us to extended the contribution of~\citet{AAK-WFA} to the case of infinite-rank Hankel matrices. Given a language model and a target size as input, we provided an algorithm to extract a WFA corresponding to an asymptotically-optimal approximation in the spectral norm. The algorithm can be applied to black box models like RNNs or transformers. 
    
    The use of approximate minimization over regular extraction has the advantage that it allows us to choose the size of the approximation and search the optimal WFA within this constraint. This is particularly useful when the extracted WFA is used for interpretability. In fact, every WFA has a graphical representation, but this is helpful only when the number of states is small enough to actually make it readable. Moreover, approximate minimization can be used to reduce the computational cost of the task considered, as the new model is smaller and often easier to compute compared to the original one. Note that, even though the spectral algorithm used in our approach does not guarantee that the extracted WFA will preserve the probabilistic nature of the function considered, there are methods that can partially address this issue~\citep{baillyQuadrWFA,anandkumarQuadrWFA}. 
    
    While the choice of the spectral norm to evaluate the approximation deserves further investigation, we think that it constitutes an interesting way to approach the problem of approximating black boxes with WFAs. In particular, it allows us to precisely compute the distance between different classes of models, for example RNNs and WFAs, and to (asymptotically) find the optimal approximation of a given size.
    
    The one-letter setting is certainly restrictive, but it is a first step towards developing provable approximation algorithms for black box models. In fact, it allows us to introduce AAK techniques in the context of black boxes for language modelling. The application of this rich mathematical theory has shown to be very effective in areas like control theory or signal processing, and our work highlights fruitful connections with these fields. Moreover, one-letter alphabets have proven to be of independent interest when dealing with automata, as in this case the classes of regular and of context-free languages collapse~\citep{Pighizzini}. 
    
    The natural next step for future work is to extend our results to larger alphabets. This cannot be done directly, since the correspondence with Hardy spaces holds only in the one-letter case. Even though a non-commutative version of AAK theory has been recently studied~\citep{popescu}, adapting this extension to functions on sequential data remains challenging.
    Nonetheless, we think that the strong theoretical foundations of this work, together with a provable algorithm for the approximate minimization problem and the possibility to compute the distance between different classes of models, make this direction worth pursuing.
 
\section*{Acknowledgments}
    This research has been supported by NSERC Canada (C. Lacroce, P. Panangaden) and Canada CIFAR AI chairs program (G. Rabusseau). The authors would like to thank Doina Precup for supporting this work, Borja Balle for fruitful discussions on the approximate minimization problem, and Maxime Wabartha for help with the problem formulation and for a detailed feedback.


\bibliography{bibliography}

\begin{thebibliography}{62}
\providecommand{\natexlab}[1]{#1}
\providecommand{\url}[1]{\texttt{#1}}
\expandafter\ifx\csname urlstyle\endcsname\relax
  \providecommand{\doi}[1]{doi: #1}\else
  \providecommand{\doi}{doi: \begingroup \urlstyle{rm}\Url}\fi

\bibitem[Adamyan et~al.(1971)Adamyan, Arov, and Krein]{AAK71}
Vadim~M. Adamyan, Damir~Zyamovich Arov, and Mark~Grigorievich Krein.
\newblock Analytic {P}roperties of {S}chmidt {P}airs for a {H}ankel {O}perator
  and the {G}eneralized {S}chur--{T}akagi problem.
\newblock \emph{Mathematics of The Ussr-sbornik}, 15:\penalty0 31--73, 1971.

\bibitem[Al-Hussari et~al.(1993)Al-Hussari, Jaimoukha, and
  Limebeer]{Al-Hussari}
M.M Al-Hussari, I.M. Jaimoukha, and D.J.N. Limebeer.
\newblock A {D}escriptor {A}pproach for the {S}olution of the {O}ne-{B}lock
  {D}istance {P}roblem.
\newblock In \emph{In Proceedings of the IFAC World Congress}, 1993.

\bibitem[Al'pin(2017)]{minor}
Yu.A. Al'pin.
\newblock The {H}ankel {M}atrix {R}ank {T}heorem {R}evisited.
\newblock \emph{Linear Algebra and its Applications}, 534:\penalty0 97--101,
  2017.
\newblock ISSN 0024--3795.
\newblock \doi{https://doi.org/10.1016/j.laa.2017.08.010}.
\newblock URL
  \url{https://www.sciencedirect.com/science/article/pii/S002437951730486X}.

\bibitem[Anandkumar et~al.(2014)Anandkumar, Ge, Hsu, Kakade, and
  Telgarsky]{anandkumarQuadrWFA}
Animashree Anandkumar, Rong Ge, Daniel Hsu, Sham~M. Kakade, and Matus
  Telgarsky.
\newblock Tensor {D}ecompositions for {L}earning {L}atent {V}ariable {M}odels.
\newblock \emph{Journal of Machine Learning Research}, 15\penalty0
  (80):\penalty0 2773--2832, 2014.
\newblock URL \url{http://jmlr.org/papers/v15/anandkumar14b.html}.

\bibitem[Antoulas(2005)]{antoulas}
Athanasios~C. Antoulas.
\newblock \emph{Approximation of Large-Scale Dynamical Systems}.
\newblock SIAM, 2005.

\bibitem[Ayache et~al.(2018)Ayache, Eyraud, and Goudian]{Ayache2018}
St{\'{e}}phane Ayache, R{\'{e}}mi Eyraud, and No{\'{e}} Goudian.
\newblock Explaining {B}lack {B}oxes on {S}equential {D}ata {U}sing {W}eighted
  {A}utomata.
\newblock In \emph{Proceedings of the 14th International Conference on
  Grammatical Inference, {ICGI} 2018, Wroc{\l}aw, Poland, September 5-7, 2018},
  volume~93 of \emph{Proceedings of Machine Learning Research}, pages 81--103.
  {PMLR}, 2018.
\newblock URL \url{http://proceedings.mlr.press/v93/ayache19a.html}.

\bibitem[Bailly(2011)]{baillyQuadrWFA}
Raphael Bailly.
\newblock Quadratic {W}eighted {A}utomata: {S}pectral {A}lgorithm and
  {L}ikelihood {M}aximization.
\newblock In Chun-Nan Hsu and Wee~Sun Lee, editors, \emph{Proceedings of the
  Asian Conference on Machine Learning}, volume~20 of \emph{Proceedings of
  Machine Learning Research}, pages 147--163, South Garden Hotels and Resorts,
  Taoyuan, Taiwain, 14--15 Nov 2011. PMLR.
\newblock URL \url{http://proceedings.mlr.press/v20/bailly11.html}.

\bibitem[Ball and Ran(1987)]{balldiscrete}
Joseph~A. Ball and Andre~CM. Ran.
\newblock Optimal {H}ankel norm model reductions and {W}iener--{H}opf
  factorization {I}: {T}he canonical case.
\newblock \emph{SIAM Journal on Control and Optimization}, 25\penalty0
  (2):\penalty0 362--382, 1987.

\bibitem[Balle and Rabusseau(2020)]{ballerabusseau}
Borja Balle and Guillaume Rabusseau.
\newblock Approximate {M}inimization of {W}eighted {T}ree {A}utomata.
\newblock \emph{Information and Computation}, page 104654, 2020.

\bibitem[Balle et~al.(2014)Balle, Carreras, Luque, and Quattoni]{BalleCLQ14}
Borja Balle, Xavier Carreras, Franco~M. Luque, and Ariadna Quattoni.
\newblock Spectral {L}earning of {W}eighted {A}utomata - {A}
  {F}orward--{B}ackward {P}erspective.
\newblock \emph{Mach. Learn.}, 96\penalty0 (1-2):\penalty0 33--63, 2014.
\newblock \doi{10.1007/s10994-013-5416-x}.
\newblock URL \url{https://doi.org/10.1007/s10994-013-5416-x}.

\bibitem[Balle et~al.(2015)Balle, Panangaden, and Precup]{Balle15}
Borja Balle, Prakash Panangaden, and Doina Precup.
\newblock A {C}anonical {F}orm for {W}eighted {A}utomata and {A}pplications to
  {A}pproximate {M}inimization.
\newblock In \emph{30th Annual {ACM/IEEE} Symposium on Logic in Computer
  Science, {LICS} 2015, Kyoto, Japan, July 6-10, 2015}, pages 701--712. {IEEE}
  Computer Society, 2015.
\newblock \doi{10.1109/LICS.2015.70}.
\newblock URL \url{https://doi.org/10.1109/LICS.2015.70}.

\bibitem[Balle et~al.(2019)Balle, Panangaden, and Precup]{Balle19}
Borja Balle, Prakash Panangaden, and Doina Precup.
\newblock Singular value automata and approximate minimization.
\newblock \emph{Math. Struct. Comput. Sci.}, 29\penalty0 (9):\penalty0
  1444--1478, 2019.
\newblock \doi{10.1017/S0960129519000094}.
\newblock URL \url{https://doi.org/10.1017/S0960129519000094}.

\bibitem[Balle et~al.(2021)Balle, Lacroce, Panangaden, Precup, and
  Rabusseau]{AAK-WFA}
Borja Balle, Clara Lacroce, Prakash Panangaden, Doina Precup, and Guillaume
  Rabusseau.
\newblock {O}ptimal {S}pectral-{N}orm {A}pproximate {M}inimization of
  {W}eighted {F}inite {A}utomata.
\newblock \emph{arXiv preprint arXiv:2102.06860}, 2021.

\bibitem[Carlyle and Paz(1971)]{CP71}
J.W. Carlyle and A.~Paz.
\newblock Realizations by stochastic finite automata.
\newblock \emph{Journal of Computer and System Sciences}, 5(1):\penalty0
  26--40, 1971.

\bibitem[Chui and Chen(1997)]{discreteH}
Charles~K. Chui and Guanrong Chen.
\newblock \emph{Discrete $H^{\infty}$ Optimization With Applications in Signal
  Processing and Control Systems}.
\newblock Springer-Verlag, 1997.

\bibitem[Chui and Li(1994)]{chui_cont}
Charles~K. Chui and Xin Li.
\newblock Continuity of {B}est {H}ankel {A}pproximation and {C}onvergence of
  {N}ear--{B}est {A}pproximants.
\newblock \emph{SIAM J. Control Optim.}, 32\penalty0 (6):\penalty0
  1769–--1781, November 1994.
\newblock ISSN 0363-0129.
\newblock \doi{10.1137/S0363012992232245}.
\newblock URL \url{https://doi.org/10.1137/S0363012992232245}.

\bibitem[Chui et~al.(1991)Chui, Li, and Ward]{chuisystem_red}
Charles~K Chui, Xin Li, and Joseph~D Ward.
\newblock System {R}eduction {V}ia {T}runcated {H}ankel {M}atrices.
\newblock \emph{Mathematics of Control, Signals and Systems}, 4\penalty0
  (2):\penalty0 161--175, 1991.

\bibitem[Chui et~al.(1992)Chui, Li, and Ward]{chui_singvect}
Charles~K. Chui, Xin Li, and Joseph~D. Ward.
\newblock Rate of {C}onvergence of {S}chmidt {P}airs and {R}ational {F}unctions
  {C}orresponding to {B}est {A}pproximants of {T}runcated {H}ankel {O}perators.
\newblock \emph{Math. Control. Signals Syst.}, 5\penalty0 (1):\penalty0 67--79,
  1992.
\newblock \doi{10.1007/BF01211976}.
\newblock URL \url{https://doi.org/10.1007/BF01211976}.

\bibitem[Cortes et~al.(2004)Cortes, Haffner, and Mohri]{cortes}
Corinna Cortes, Patrick Haffner, and Mehryar Mohri.
\newblock Rational {K}ernels: {T}heory and {A}lgorithms.
\newblock \emph{Journal of Machine Learning Research (JMLR)}, 5:\penalty0
  1035--1062, 2004.
\newblock URL \url{http://www.cs.nyu.edu/~mohri/postscript/jmlr.pdf}.

\bibitem[Denis and Esposito(2008)]{denishmm}
Fran{\c{c}}ois Denis and Yann Esposito.
\newblock On {R}ational {S}tochastic {L}anguages.
\newblock \emph{Fundamenta Informaticae}, 86\penalty0 (1, 2):\penalty0 41--77,
  2008.

\bibitem[Doshi-Velez and Kim(2017)]{doshi-rigorous}
Finale Doshi-Velez and Been Kim.
\newblock Towards a {R}igorous {S}cience of {I}nterpretable {M}achine
  {L}earning, 2017.

\bibitem[Eckart and Young(1936)]{Eckart}
C.~Eckart and G.~Young.
\newblock The approximation of one matrix by another of lower rank.
\newblock \emph{Psychometrika}, 1:\penalty0 211--218, 1936.
\newblock \doi{10.1007/BF02288367}.
\newblock URL \url{https://doi.org/10.1007/BF02288367}.

\bibitem[Eyraud and Ayache(2020)]{eyraud2020}
R{\'{e}}mi Eyraud and St{\'{e}}phane Ayache.
\newblock Distillation of {W}eighted {A}utomata from {R}ecurrent {N}eural
  {N}etworks {U}sing a {S}pectral {A}pproach.
\newblock \emph{CoRR}, abs/2009.13101, 2020.
\newblock URL \url{https://arxiv.org/abs/2009.13101}.

\bibitem[Fliess(1974)]{Fli}
Michel Fliess.
\newblock Matrice de {H}ankel.
\newblock \emph{Journal de Math{\'{e}}matique Pures et Appliqu{\'{e}}es},
  5:\penalty0 197--222, 1974.

\bibitem[Giles et~al.(1992)Giles, Miller, Chen, Chen, Sun, and Lee]{Giles}
Clyde~Lee Giles, Clifford~B. Miller, Dong Chen, Hsing{-}Hen Chen, Guo{-}Zheng
  Sun, and Yee{-}Chun Lee.
\newblock Learning and extracting finite state automata with second-order
  recurrent neural networks.
\newblock \emph{Neural Comput.}, 4\penalty0 (3):\penalty0 393--405, 1992.
\newblock \doi{10.1162/neco.1992.4.3.393}.
\newblock URL \url{https://doi.org/10.1162/neco.1992.4.3.393}.

\bibitem[Glover(1984)]{Glover}
Keith Glover.
\newblock All {O}ptimal {H}ankel--{N}norm {A}pproximations of {L}inear
  {M}ultivariable {S}ystems and their $\mathcal{L}^{\infty}$--{E}rror {B}ounds.
\newblock \emph{International Journal of Control}, 39\penalty0 (6):\penalty0
  1115--1193, 1984.
\newblock \doi{10.1080/00207178408933239}.
\newblock URL \url{https://doi.org/10.1080/00207178408933239}.

\bibitem[Glover et~al.(1988)Glover, Curtain, and Partington]{Glover_infdim}
Keith Glover, Ruth~F Curtain, and Jonathan~R Partington.
\newblock Realisation and {A}pproximation of {L}inear {I}nfinite--{D}imensional
  {S}ystems with {E}rror {B}ounds.
\newblock \emph{SIAM Journal on Control and Optimization}, 26\penalty0
  (4):\penalty0 863--898, 1988.

\bibitem[Gu(2005)]{gu}
Guoxiang Gu.
\newblock All {O}ptimal {H}ankel--{N}orm {A}pproximations and their {E}error
  {B}ounds in {D}iscrete--{T}ime.
\newblock \emph{International Journal of Control}, 78\penalty0 (6):\penalty0
  408--423, 2005.
\newblock \doi{10.1080/00207170500110988}.

\bibitem[Hammerschmidt et~al.(2016)Hammerschmidt, Verwer, Lin, and
  State]{interpretingWFA}
Christian~Albert Hammerschmidt, Sicco Verwer, Qin Lin, and Radu State.
\newblock Interpreting {F}inite {A}utomata for {S}equential {D}ata, 2016.

\bibitem[Hinton et~al.(2015)Hinton, Vinyals, and Dean]{hinton2015distilling}
Geoffrey Hinton, Oriol Vinyals, and Jeff Dean.
\newblock Distilling the knowledge in a neural network.
\newblock \emph{arXiv preprint arXiv:1503.02531}, 2015.

\bibitem[Hochreiter and Schmidhuber(1997)]{Schmidhuber}
Sepp Hochreiter and J{\"{u}}rgen Schmidhuber.
\newblock {L}ong {S}hort-{T}erm {M}emory.
\newblock \emph{Neural Computation}, 9\penalty0 (8):\penalty0 1735--1780, 1997.

\bibitem[H\"ormander and Melin(1994)]{Hormander}
Lars H\"ormander and Anders Melin.
\newblock A {R}emark on {P}erturbations of {C}ompact {O}perators.
\newblock \emph{Mathematica Scandinavica}, 75\penalty0 (2):\penalty0 255--262,
  1994.
\newblock ISSN 00255521, 19031807.
\newblock URL \url{http://www.jstor.org/stable/24491887}.

\bibitem[Hwang(2004)]{cauchy}
Suk-Geun Hwang.
\newblock Cauchy's {I}nterlace {T}heorem for {E}igenvalues of {H}ermitian
  {M}atrices.
\newblock \emph{The American Mathematical Monthly}, 111\penalty0 (2):\penalty0
  157--159, 2004.
\newblock \doi{10.1080/00029890.2004.11920060}.
\newblock URL \url{https://doi.org/10.1080/00029890.2004.11920060}.

\bibitem[Ionescu and Oara(2001)]{Ionescu}
Vlad Ionescu and Cristian Oara.
\newblock The four-block {A}damjan-{A}rov-{K}ein problem for discrete-time
  systems.
\newblock In \emph{Linear Algebra and its Application}, pages 95--119.
  Elsevier, 2001.

\bibitem[Kato(2013)]{kato}
Tosio Kato.
\newblock \emph{Perturbation theory for linear operators}, volume 132.
\newblock Springer Science \& Business Media, 2013.

\bibitem[Kreĭn and Gohberg(1969)]{krein_linearop}
M.~Kreĭn and I.~Gohberg.
\newblock \emph{Introduction to the {T}heory of {L}inear {N}onselfadjoint
  {O}perators in {H}ilbert {S}pace}, volume~18 of \emph{Translations of
  Mathematical Monographs}.
\newblock American Mathematical Society, 1969.

\bibitem[Kronecker(1881)]{kronecker}
L.~Kronecker.
\newblock {Z}ur {T}heorie der {E}limination einer {V}ariablen aus zwei
  algebraischen {G}leichungen.
\newblock \emph{Montasber. Königl. Preussischen Acad Wies}, pages 535 -- 600,
  1881.

\bibitem[Kung and Tong(1977)]{partial_fractions}
H.~T. Kung and D.~M. Tong.
\newblock Fast {A}lgorithms for {P}artial {F}raction {D}ecomposition.
\newblock \emph{{SIAM} J. Comput.}, 6\penalty0 (3):\penalty0 582--593, 1977.
\newblock \doi{10.1137/0206042}.
\newblock URL \url{https://doi.org/10.1137/0206042}.

\bibitem[Kung(1980)]{Kung80}
Sun-Yuan Kung.
\newblock Optimal {H}ankel-{N}norm {M}odel {R}eductions: {S}calar {S}ystems.
\newblock In \emph{Proceedings of the 1980 Joint Automation Control
  Conference,San Francisco, CA}, page Paper FA8.A, 1980.

\bibitem[Kung and Lin(1981)]{KungLin}
Sun-Yuan Kung and David~W. Lin.
\newblock {O}ptimal {H}ankel-{N}orm {M}odel {R}eductions: {M}ultivariable
  {S}ystems.
\newblock \emph{IEEE Transactions Automation Control}, 26:\penalty0 832--852,
  1981.

\bibitem[Marzouk and de~la Higuera(2020)]{DistanceRNNWFA}
Reda Marzouk and Colin de~la Higuera.
\newblock Distance and equivalence between finite state machines and recurrent
  neural networks: Computational results, 2020.

\bibitem[Merrill et~al.(2020)Merrill, Weiss, Goldberg, Schwartz, Smith, and
  Yahav]{merrill2020}
William Merrill, Gail Weiss, Yoav Goldberg, Roy Schwartz, Noah~A. Smith, and
  Eran Yahav.
\newblock A formal hierarchy of {RNN} architectures.
\newblock In Dan Jurafsky, Joyce Chai, Natalie Schluter, and Joel~R. Tetreault,
  editors, \emph{Proceedings of the 58th Annual Meeting of the Association for
  Computational Linguistics, {ACL} 2020, Online, July 5-10, 2020}, pages
  443--459. Association for Computational Linguistics, 2020.
\newblock \doi{10.18653/v1/2020.acl-main.43}.
\newblock URL \url{https://doi.org/10.18653/v1/2020.acl-main.43}.

\bibitem[Nehari(1957)]{Nehari}
Zeev Nehari.
\newblock On {B}ounded {B}ilinear {F}orms.
\newblock \emph{Annals of Mathematics}, 65\penalty0 (1):\penalty0 153--162,
  1957.

\bibitem[Nikol'Skii(2002)]{Nikolski}
Nikolai~K. Nikol'Skii.
\newblock \emph{Operators, Functions and Systems: An Easy Reading}, volume~92
  of \emph{Mathematical Surveys and Monographs}.
\newblock American Mathematical Society, 2002.

\bibitem[Okudono et~al.(2020)Okudono, Waga, Sekiyama, and Hasuo]{Takamasa}
Takamasa Okudono, Masaki Waga, Taro Sekiyama, and Ichiro Hasuo.
\newblock Weighted {A}utomata {E}xtraction from {R}ecurrent {N}eural {N}etworks
  via {R}egression on {S}tate {S}paces.
\newblock In \emph{The Thirty-Fourth {AAAI} Conference on Artificial
  Intelligence, {AAAI} 2020, The Thirty-Second Innovative Applications of
  Artificial Intelligence Conference, {IAAI} 2020, The Tenth {AAAI} Symposium
  on Educational Advances in Artificial Intelligence, {EAAI} 2020, New York,
  NY, USA, February 7-12, 2020}, pages 5306--5314. {AAAI} Press, 2020.
\newblock URL \url{https://aaai.org/ojs/index.php/AAAI/article/view/5977}.

\bibitem[Omlin and Giles(1996)]{OmlinGiles}
Christian~W. Omlin and Clyde~Lee Giles.
\newblock Constructing deterministic finite-state automata in recurrent neural
  networks.
\newblock \emph{J. {ACM}}, 43\penalty0 (6):\penalty0 937--972, 1996.
\newblock \doi{10.1145/235809.235811}.

\bibitem[Peller(2012)]{Peller}
Vladimir Peller.
\newblock \emph{{H}ankel {O}perators and their {A}pplications}.
\newblock Springer Science \& Business Media, 2012.

\bibitem[Pighizzini(2015)]{Pighizzini}
G.~Pighizzini.
\newblock Investigations on {A}utomata and {L}anguages {O}ver a {U}nary
  {A}lphabet.
\newblock \emph{Int. J. Found. Comput. Sci.}, 26:\penalty0 827--850, 2015.

\bibitem[Popescu(2003)]{popescu}
Gelu Popescu.
\newblock Multivariable {N}ehari {P}roblem and {I}nterpolation.
\newblock \emph{Journal of Functional Analysis}, 200:\penalty0 536--581, 2003.
\newblock ISSN 0022-1236.
\newblock \doi{10.1016/S0022-1236(03)00078-8}.
\newblock URL \url{https://doi.org/10.1016/S0022-1236(03)00078-8}.

\bibitem[Rabusseau et~al.(2019)Rabusseau, Li, and Precup]{Rabusseau19}
Guillaume Rabusseau, Tianyu Li, and Doina Precup.
\newblock Connecting {W}eighted {A}utomata and {R}ecurrent {N}eural {N}etworks
  through {S}pectral {L}earning.
\newblock In Kamalika Chaudhuri and Masashi Sugiyama, editors, \emph{The 22nd
  International Conference on Artificial Intelligence and Statistics, {AISTATS}
  2019, 16-18 April 2019, Naha, Okinawa, Japan}, volume~89 of \emph{Proceedings
  of Machine Learning Research}, pages 1630--1639. {PMLR}, 2019.
\newblock URL \url{http://proceedings.mlr.press/v89/rabusseau19a.html}.

\bibitem[Riesz and Sz\"okefalvi-Nagy(1955)]{riesz}
Frigyes Riesz and B\'ela Sz\"okefalvi-Nagy.
\newblock \emph{Functional {A}nalysis [by] Frigyes Riesz and Béla Sz.-Nagy.
  Translated from the 2nd French ed. by Leo F. Boron.}
\newblock F. Ungar Pub. Co., New York, 1955.

\bibitem[Tao(2012)]{taobook}
Terence Tao.
\newblock \emph{Topics in {R}andom {M}atrix {T}heory}, volume 132.
\newblock American Mathematical Soc., 2012.

\bibitem[Tao and Vu(2014)]{tao-random}
Terence Tao and Van Vu.
\newblock Random {M}atrices {H}ave {S}imple {S}pectrum, 2014.

\bibitem[Theertha~Suresh et~al.(2019)Theertha~Suresh, Roark, Riley, and
  Schogol]{probmod}
Ananda Theertha~Suresh, Brian Roark, Michael Riley, and Vlad Schogol.
\newblock Approximating {P}robabilistic {M}odels as {W}eighted {F}inite
  {A}utomata.
\newblock \emph{arXiv e-prints}, pages arXiv--1905, 2019.

\bibitem[Vaswani et~al.(2017)Vaswani, Shazeer, Parmar, Uszkoreit, Jones, Gomez,
  Kaiser, and Polosukhin]{transformers}
Ashish Vaswani, Noam Shazeer, Niki Parmar, Jakob Uszkoreit, Llion Jones,
  Aidan~N. Gomez, Lukasz Kaiser, and Illia Polosukhin.
\newblock Attention is {A}ll {Y}ou {N}eed.
\newblock \emph{Advances in Neural Information Processing Systems}, pages
  5998--6008, 2017.

\bibitem[von Neumann and Wigner(1993)]{vonNeumann}
J.~von Neumann and E.~P. Wigner.
\newblock \emph{{\"U}ber das Verhalten von Eigenwerten bei adiabatischen
  Prozessen}, pages 294--297.
\newblock Springer Berlin Heidelberg, Berlin, Heidelberg, 1993.
\newblock ISBN 978-3-662-02781-3.
\newblock \doi{10.1007/978-3-662-02781-3_20}.
\newblock URL \url{https://doi.org/10.1007/978-3-662-02781-3_20}.

\bibitem[Wang et~al.(2018)Wang, Zhang, Ororbia~II, Xing, Liu, and
  Giles]{giles2}
Qinglong Wang, Kaixuan Zhang, Alexander~G Ororbia~II, Xinyu Xing, Xue Liu, and
  C~Lee Giles.
\newblock An {E}mpirical {E}valuation of {R}ule {E}xtraction {F}rom {R}ecurrent
  {N}eural {N}etworks.
\newblock \emph{Neural computation}, 30\penalty0 (9):\penalty0 2568--2591,
  2018.

\bibitem[Weiss et~al.(2018{\natexlab{a}})Weiss, Goldberg, and
  Yahav]{WeissDFA18}
Gail Weiss, Yoav Goldberg, and Eran Yahav.
\newblock Extracting automata from recurrent neural networks using queries and
  counterexamples.
\newblock In Jennifer~G. Dy and Andreas Krause, editors, \emph{Proceedings of
  the 35th International Conference on Machine Learning, {ICML} 2018,
  Stockholmsm{\"{a}}ssan, Stockholm, Sweden, July 10-15, 2018}, volume~80 of
  \emph{Proceedings of Machine Learning Research}, pages 5244--5253. {PMLR},
  2018{\natexlab{a}}.
\newblock URL \url{http://proceedings.mlr.press/v80/weiss18a.html}.

\bibitem[Weiss et~al.(2018{\natexlab{b}})Weiss, Goldberg, and Yahav]{WeissGY18}
Gail Weiss, Yoav Goldberg, and Eran Yahav.
\newblock On the {P}ractical {C}omputational {P}ower of {F}inite {P}recision
  {R}{N}{N}s for {L}anguage {R}ecognition.
\newblock \emph{CoRR}, 2018{\natexlab{b}}.
\newblock URL \url{http://arxiv.org/abs/1805.04908}.

\bibitem[Weiss et~al.(2019)Weiss, Goldberg, and Yahav]{WeissWFA19}
Gail Weiss, Yoav Goldberg, and Eran Yahav.
\newblock Learning {D}eterministic {W}eighted {A}utomata with {Q}ueries and
  {C}ounterexamples.
\newblock In \emph{Advances in Neural Information Processing Systems 32: Annual
  Conference on Neural Information Processing Systems 2019, NeurIPS 2019,
  December 8-14, 2019, Vancouver, BC, Canada}, pages 8558--8569, 2019.
\newblock URL
  \url{https://proceedings.neurips.cc/paper/2019/hash/d3f93e7766e8e1b7ef66dfdd9a8be93b-Abstract.html}.

\bibitem[Zhang et~al.(2021)Zhang, Du, Xie, Ma, Liu, and Sun]{Zhang}
Xiyue Zhang, Xiaoning Du, Xiaofei Xie, Lei Ma, Yang Liu, and Meng Sun.
\newblock Decision-{G}uided {W}eighted {A}utomata {E}xtraction from {R}ecurrent
  {N}eural {N}etworks.
\newblock \emph{Proceedings of the AAAI Conference on Artificial Intelligence},
  35\penalty0 (13):\penalty0 11699--11707, May 2021.
\newblock URL \url{https://ojs.aaai.org/index.php/AAAI/article/view/17391}.

\bibitem[Zhu(1990)]{Zhu}
Kehe Zhu.
\newblock \emph{Operator Theory in Function Spaces}, volume 138.
\newblock American Mathematical Society, 1990.

\end{thebibliography}

\newpage
\appendix

\section{Example}\label{appendix:ex}
    
    In this section we show an illustrative example, analogous to the one presented by~\citet{AAK-WFA}.
    
    We consider the function $f:\N\rightarrow\R$, computing a probability, defined as:
    \begin{equation*}
            f(k)= \begin{cases}
                    0   &\text{if $k$ is odd} \\
                    \frac{8}{9}3^{-k}   &\text{if $k$ is even}
                    \end{cases} 
        \end{equation*}
    The corresponding Hankel matrix is:
    \begin{equation}
            \H=\begin{pmatrix} f(0) & f(1) & f(2) & \dots \\
                               f(1) & f(2) & f(3) &\dots \\
                               f(2) & f(3) & f(4) &\dots \\
                                \vdots& \vdots &\vdots&\ddots
            \end{pmatrix}=
            \begin{pmatrix} \frac{8}{9} & 0 & \frac{8}{81} & \dots \\
                               0 & \frac{8}{81} & 0 &\dots \\
                               \frac{8}{81} & 0 & \frac{8}{729} &\dots \\
                                \vdots& \vdots &\vdots&\ddots
            \end{pmatrix}.
    \end{equation} 
    
    If we consider the Hankel matrix with respect to the basis of the Hardy space, since $\H(j,k)= \widehat{\phi}(-j-k-1)$, we have:
    \begin{equation*}
            \H=
            \begin{pmatrix} \frac{8}{9} & 0 & \frac{8}{81} & \dots \\
                               0 & \frac{8}{81} & 0 &\dots \\
                               \frac{8}{81} & 0 & \frac{8}{729} &\dots \\
                                \vdots& \vdots &\vdots&\ddots
            \end{pmatrix}
            =\begin{pmatrix} \widehat{\phi}(-1) & \widehat{\phi}(-2) & \widehat{\phi}(-3) & \dots \\
                               \widehat{\phi}(-2) & \widehat{\phi}(-3) & \widehat{\phi}(-4) &\dots \\
                               \widehat{\phi}(-3) & \widehat{\phi}(-4) & \widehat{\phi}(-5) &\dots \\
                                \vdots& \vdots &\vdots&\ddots
            \end{pmatrix}.
        \end{equation*} 
        and the rational component of a symbol for $\H$ is: 
        \begin{equation*}
        \mathbb{P}_-\phi=\sum_{n \geq 0}\widehat{\phi}(-n-1)z^{-n-1}=\sum_{n \geq 0}\frac{8}{9}9^{-n}z^{-2n-1}=\frac{8z}{9z^2-1}.
        \end{equation*}

\section{Proofs}\label{apd:first}

    In this section, we recall a few fundamental mathematical results used in the paper.
    
    \subsection*{Nehari's Theorem}
    
    We start by Nehari's Theorem~\citep{Nehari}, which is used in the proof of Theorem~\ref{thm:wiener}. We state two versions of this theorem, which can be found in~\citet{Peller}. This result is of great importance in the theory of Hankel operator, as it highlights the correspondence between bounded Hankel operators and functions in $\mathcal{L}^{\infty}(\mathbb{T})$.
    
    \begin{theorem}[\citet{Nehari}]\label{thm:nehari1}
        Let $H: \ell^2 \rightarrow \ell^2$ be a Hankel operator having Hankel matrix $\H(j,k)=\{\alpha_{j+k}\}_{j,k \geq 0}$. We have that $H$ is bounded on $\ell^2$ if and only if there exists a function $\psi \in \mathcal{L}^{\infty}(\mathbb{T})$ such that:
        \begin{equation}
            \alpha_m=\widehat{\psi}(m), \quad m\geq 0.
        \end{equation}
        In this case:
        \begin{equation}
             \norm{H} =\inf\{\norm{\psi}_{\infty}:\widehat{\psi}(n)=\widehat{\phi}(n), \, n \geq 0\}.
        \end{equation}
        Where $\widehat{\psi}(n)$ is the $n$-th Fourier coefficient of $\psi$.
    \end{theorem}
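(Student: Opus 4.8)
The plan is to prove both implications at once by identifying the operator norm of $H$ with the norm of a single linear functional on $\mathcal{H}^1$, and then invoking the Hahn--Banach theorem together with the duality $(\mathcal{L}^1(\mathbb{T}))^*=\mathcal{L}^{\infty}(\mathbb{T})$. The existence of a bounded symbol and the norm formula will then fall out simultaneously.

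First I would dispose of the easy inequality. If some $\psi\in\mathcal{L}^{\infty}(\mathbb{T})$ satisfies $\widehat{\psi}(m)=\alpha_m$ for $m\geq0$, then $\psi$ serves as a symbol for $H$, and since $H$ factors as multiplication by $\psi$ followed by the orthogonal projection $\mathbb{P}_-$, and multiplication by $\psi$ on $\mathcal{L}^2(\mathbb{T})$ has operator norm exactly $\norm{\psi}_{\infty}$ while $\mathbb{P}_-$ is a contraction, we obtain $\norm{H}\leq\norm{\psi}_{\infty}$. Taking the infimum over all admissible $\psi$ gives $\norm{H}\leq\inf\{\norm{\psi}_{\infty}:\widehat{\psi}(n)=\widehat{\phi}(n),\,n\geq0\}$; in particular, the existence of one bounded symbol already forces $H$ to be bounded.

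The substance is the reverse inequality, for which I would reinterpret the Hankel bilinear form. For $a(z)=\sum_{j\geq0}a_jz^j$ and $b(z)=\sum_{k\geq0}b_kz^k$ in $\mathcal{H}^2$, the pairing $\sum_{j,k\geq0}\alpha_{j+k}a_jb_k$ depends on $a,b$ only through the coefficients of the product $ab$, so it equals $\Lambda(ab):=\sum_{n\geq0}\alpha_n\widehat{ab}(n)$, and more generally $\Lambda(h)=\sum_{n\geq0}\alpha_n\widehat{h}(n)$ for $h\in\mathcal{H}^1$. The crucial structural input is the Riesz factorization $\mathcal{H}^1=\mathcal{H}^2\cdot\mathcal{H}^2$: every $h\in\mathcal{H}^1$ factors as $h=ab$ with $a,b\in\mathcal{H}^2$ and $\norm{a}_2\norm{b}_2=\norm{h}_1$, while conversely $\norm{ab}_1\leq\norm{a}_2\norm{b}_2$. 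Combined with the identity $\norm{H}=\sup\{\,|\sum_{j,k}\alpha_{j+k}a_jb_k|:\norm{a}_2,\norm{b}_2\leq1\,\}$, this shows that $\Lambda$ is a bounded functional on $\mathcal{H}^1$ with $\norm{\Lambda}_{(\mathcal{H}^1)^*}=\norm{H}$.

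Finally I would extend and represent. Regarding $\mathcal{H}^1$ as a closed subspace of $\mathcal{L}^1(\mathbb{T})$, Hahn--Banach extends $\Lambda$ to $\widetilde{\Lambda}$ on $\mathcal{L}^1(\mathbb{T})$ with $\norm{\widetilde{\Lambda}}=\norm{\Lambda}=\norm{H}$. Since $(\mathcal{L}^1(\mathbb{T}))^*=\mathcal{L}^{\infty}(\mathbb{T})$, there is $\psi\in\mathcal{L}^{\infty}(\mathbb{T})$ representing $\widetilde{\Lambda}$ with $\norm{\psi}_{\infty}=\norm{H}$; testing against the monomials $z^n$, $n\geq0$, gives $\widehat{\psi}(n)=\Lambda(z^n)=\alpha_n$ (after fixing the convention for the pairing). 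Thus $\psi$ is an admissible symbol attaining $\norm{\psi}_{\infty}=\norm{H}$, which yields $\inf\{\norm{\psi}_{\infty}:\dots\}\leq\norm{H}$ and, together with the easy direction, equality with the infimum actually attained. I expect the main obstacle to be the Riesz factorization step: the equality $\norm{\Lambda}_{(\mathcal{H}^1)^*}=\norm{H}$ hinges on the fact that every unit-norm $\mathcal{H}^1$ function splits into two unit-norm $\mathcal{H}^2$ factors, which rests on the inner--outer factorization theory of Hardy spaces rather than on soft functional analysis; the extension and representation steps are then routine.
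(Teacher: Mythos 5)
The paper never proves this statement: Theorem~\ref{thm:nehari1} is Nehari's classical theorem, quoted as background (with proof deferred to the cited references of Nehari and Peller) for use in Theorem~\ref{thm:wiener}. Your argument is exactly the standard proof from that literature---control the Hankel bilinear form by the functional $\Lambda(h)=\sum_{n\geq0}\alpha_n\widehat{h}(n)$ on $\mathcal{H}^1$, use the Riesz factorization $\mathcal{H}^1=\mathcal{H}^2\cdot\mathcal{H}^2$ with exact norm splitting to get $\norm{\Lambda}_{(\mathcal{H}^1)^*}=\norm{H}$, then Hahn--Banach and the duality $(\mathcal{L}^1(\mathbb{T}))^*=\mathcal{L}^{\infty}(\mathbb{T})$ to produce a norm-attaining symbol---and it is correct, up to the routine index-reflection conventions and the well-definedness of $\Lambda$ on all of $\mathcal{H}^1$ (defined first on polynomials, then extended via the factorization), both of which you appropriately flag.
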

    
    We can now reformulate this theorem using the characterization of Hankel operators in Hardy spaces.
    
    \begin{theorem}[\citet{Nehari}]\label{thm:nehari}
        Let $\phi \in  \mathcal{L}^2(\mathbb{T})$ be a symbol of the Hankel operator on Hardy spaces $H_{\phi}:\mathcal{H}^2 \rightarrow \mathcal{H}^2_-$. Then, $H_{\phi}$ is bounded on $\mathcal{H}^2$ if and only if there exists $\psi \in \mathcal{L}^{\infty}(\mathbb{T})$ such that $\widehat{\psi}(m)=\widehat{\phi}(m)$ for all $m<0$. If the conditions above are satisfied, then:
        \begin{equation}\label{eq:nehari}
            \norm{H_{\phi}}=\inf\{\norm{\psi}_{\infty}:\widehat{\psi}(m)=\widehat{\phi}(m), \, m<0\}.
        \end{equation}
    \end{theorem}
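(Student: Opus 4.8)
The plan is to derive Theorem~\ref{thm:nehari} as a change-of-basis reformulation of the $\ell^2$-version already available as Theorem~\ref{thm:nehari1}, so that no new hard analysis is required beyond careful Fourier bookkeeping. First I would fix the orthonormal bases $\{z^k\}_{k\geq 0}$ of $\mathcal{H}^2$ and $\{z^{-j-1}\}_{j\geq 0}$ of $\mathcal{H}^2_-$ and compute the matrix of $H_\phi$. Since $H_\phi f = \mathbb{P}_-(\phi f)$, the entry in row $j$, column $k$ is $\langle \mathbb{P}_-(\phi z^k), z^{-j-1}\rangle = \widehat{\phi}(-j-k-1)$, recovering the identity $\H(j,k)=\widehat{\phi}(-j-k-1)$ of Remark~\ref{remark1}. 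Setting $\alpha_m := \widehat{\phi}(-m-1)$ for $m\geq 0$, this is literally the Hankel matrix $\H(j,k)=\alpha_{j+k}$ of Theorem~\ref{thm:nehari1}, and because both bases are orthonormal the identification is unitary; hence boundedness and the operator norm are preserved.

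Applying Theorem~\ref{thm:nehari1} to this matrix then gives that $H_\phi$ is bounded if and only if there exists $\psi_0 \in \mathcal{L}^\infty(\mathbb{T})$ with $\widehat{\psi_0}(m)=\alpha_m=\widehat{\phi}(-m-1)$ for all $m\geq 0$, and in that case $\norm{H_\phi}=\inf\{\norm{\psi_0}_\infty : \widehat{\psi_0}(m)=\widehat{\phi}(-m-1),\, m\geq 0\}$. The remaining task is purely symbolic: to rewrite the constraint ``$\widehat{\psi_0}(m)=\widehat{\phi}(-m-1)$ for $m\geq 0$'' as ``$\widehat{\psi}(m)=\widehat{\phi}(m)$ for $m<0$''. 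For this I would introduce the flip-and-shift map $(T\psi_0)(z)=z^{-1}\psi_0(z^{-1})$ on $\mathbb{T}$, whose effect on Fourier coefficients is $\widehat{T\psi_0}(k)=\widehat{\psi_0}(-k-1)$. Since $z\mapsto z^{-1}$ preserves normalized Lebesgue measure on $\mathbb{T}$ and $|z^{-1}|=1$ there, the map $T$ is an isometric bijection of $\mathcal{L}^\infty(\mathbb{T})$. A direct check shows that $\psi_0$ satisfies $\widehat{\psi_0}(m)=\widehat{\phi}(-m-1)$ for $m\geq 0$ if and only if $\psi:=T\psi_0$ satisfies $\widehat{\psi}(k)=\widehat{\phi}(k)$ for $k<0$, so $T$ carries one admissible set bijectively and isometrically onto the other.

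I would then conclude by transporting both the existence statement and the infimum through $T$: a bounded symbol of one type exists exactly when one of the other type does, and $\inf\{\norm{\psi_0}_\infty\}=\inf\{\norm{\psi}_\infty\}$ because $T$ preserves the $\mathcal{L}^\infty$ norm. This yields exactly Equation~\ref{eq:nehari}. If desired, the easy inequality $\norm{H_\phi}\le \inf\{\norm{\psi}_\infty : \widehat{\psi}(m)=\widehat{\phi}(m),\, m<0\}$ can be recorded independently, since $\norm{H_\phi f}=\norm{\mathbb{P}_-(\psi f)}\le \norm{\psi f}_{\mathcal{L}^2}\le \norm{\psi}_\infty\norm{f}_{\mathcal{L}^2}$ using that $\mathbb{P}_-$ is a contraction; this makes transparent that only the reverse inequality genuinely relies on Theorem~\ref{thm:nehari1}.

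The main obstacle I anticipate is not analytic but combinatorial: keeping the index conventions for the two orthonormal bases consistent, and verifying that the coefficients of the symbol left free in the $\ell^2$ picture (indices $m<0$) are carried by $T$ precisely onto the coefficients left free in the Hardy-space picture (indices $k\geq 0$). This last point is what guarantees that the correspondence between admissible symbols is a genuine bijection rather than merely a one-sided inclusion, and hence that the two infima coincide rather than merely satisfying one inequality.
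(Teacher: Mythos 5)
Your derivation is correct, and it fills in exactly the step the paper leaves implicit: the paper states Theorem~\ref{thm:nehari} without proof, as a citation of Nehari's theorem and a ``reformulation'' of Theorem~\ref{thm:nehari1} via the Hardy-space characterization of Hankel operators, which is precisely the unitary change-of-basis plus index-relabelling argument you carry out (your identity $\H(j,k)=\widehat{\phi}(-j-k-1)$, the flip map $T$ with $\widehat{T\psi_0}(k)=\widehat{\psi_0}(-k-1)$, and the isometric bijection between the two admissible symbol classes are all verified correctly). So your proposal is sound and essentially the same approach the paper intends, just made explicit.
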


    \subsection*{Riesz Eigenvalues Inequality}
    
    The following result is used in the proof of Theorem~\ref{thm:nearopt} and to establish the relation between the singular value of the original Hankel matrix and the one of the truncation. This result is due to ~\citet{riesz} (see~\citet{krein_linearop} for the proof and for a more general version of this theorem). 
    
    \begin{lemma}[\citet{riesz}]\label{lemma:singnumb}
        Let $T,S$ be two self-adjoint compact operators, and let $\sigma_k^T$, $\sigma_k^S$ for $k \geq 0$ be their singular numbers. Then:
        \begin{equation}
            |\sigma_k^T-\sigma_k^S| \leq \norm{\mat{S}-\mat{T}}.
        \end{equation}
    \end{lemma}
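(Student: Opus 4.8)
The plan is to prove the inequality using the variational (min--max) characterization of the singular numbers of a compact operator, combined with the triangle inequality for the operator norm. The key fact I would invoke is the Courant--Fischer--Weyl principle: for a compact operator $T$ with singular numbers $\sigma_0^T \geq \sigma_1^T \geq \dots$ arranged in decreasing order,
\begin{equation*}
    \sigma_k^T = \min_{\substack{L \subseteq X \\ \dim L \leq k}} \ \max_{\substack{x \perp L \\ \norm{x}=1}} \norm{Tx}.
\end{equation*}
Working with $\norm{Tx}$ rather than the bilinear form $\langle Tx,x\rangle$ is what makes the argument run uniformly: since $S$ and $T$ are self-adjoint their eigenvalues may have either sign, but the singular numbers are exactly the absolute values $|\lambda_k|$, and $\norm{Tx}=\langle T^*Tx,x\rangle^{1/2}$ captures precisely these quantities.

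First I would fix $k$ and let $L_T$ be a subspace of dimension at most $k$ attaining the minimum above for $T$, so that $\sigma_k^T=\max_{x\perp L_T,\,\norm{x}=1}\norm{Tx}$. For any unit vector $x\perp L_T$, the triangle inequality gives $\norm{Sx}\leq\norm{Tx}+\norm{(S-T)x}\leq\norm{Tx}+\norm{S-T}$, where $\norm{S-T}$ is the operator norm. Taking the maximum over all such $x$ and then using $L_T$ as a competitor in the min--max expression for $S$ yields
\begin{equation*}
    \sigma_k^S \leq \max_{\substack{x \perp L_T \\ \norm{x}=1}} \norm{Sx} \leq \sigma_k^T + \norm{S-T}.
\end{equation*}

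Next I would repeat the argument with the roles of $S$ and $T$ exchanged to obtain $\sigma_k^T\leq\sigma_k^S+\norm{T-S}$. Since $\norm{S-T}=\norm{T-S}$, combining the two bounds gives $|\sigma_k^T-\sigma_k^S|\leq\norm{S-T}$, which is the claim. By Remark~\ref{remark}, the operator norm on the right coincides with the spectral norm $\norm{\mat{S}-\mat{T}}$ of the associated matrices, matching the statement as written.

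I expect the only genuinely delicate point to be the justification of the min--max characterization itself in the infinite-dimensional compact setting, in particular the fact that the outer minimum is actually attained, so that an optimal subspace $L_T$ exists; this follows from the Hilbert--Schmidt decomposition by taking $L_T$ to be the span of the top $k$ singular vectors $\{\boldsymbol{\xi}_0,\dots,\boldsymbol{\xi}_{k-1}\}$ of $T$. An essentially equivalent alternative would be to characterize $\sigma_k^T$ as the distance in operator norm from $T$ to the set of operators of rank at most $k$ (the operator version of Theorem~\ref{thm:eckart}); then, choosing $F$ of rank $\leq k$ with $\norm{T-F}=\sigma_k^T$, one bounds $\sigma_k^S\leq\norm{S-F}\leq\norm{S-T}+\sigma_k^T$ directly, bypassing the min--max formula entirely.
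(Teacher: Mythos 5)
Your proof is correct. The min--max characterization you invoke is valid for compact operators on a Hilbert space, the outer minimum is indeed attained on the span of the top $k$ singular vectors of $T$ (via the Hilbert--Schmidt decomposition), and the two-sided triangle-inequality argument closes without gaps; your alternative route via the distance to operators of rank at most $k$ is equally sound. The paper, however, does not prove this lemma at all: it states it as a classical eigenvalue perturbation result due to Riesz, defers the proof to Krein's book on linear operators, and merely remarks that ``in our setting this holds automatically for singular values'' because the operators considered are self-adjoint. Your argument is therefore genuinely different and, in two respects, buys more. First, it is self-contained. Second, it never uses self-adjointness: working directly with $\norm{Tx}$ makes the inequality valid for arbitrary compact operators, whereas the paper's transfer from eigenvalues to singular values leans on self-adjointness and quietly glosses over the reordering subtlety that arises when signed eigenvalues are replaced by their absolute values arranged in decreasing order --- your approach sidesteps that issue entirely. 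The only cost is that you must justify the min--max formula (or the rank-distance characterization, i.e.\ the operator version of Theorem~\ref{thm:eckart}), which you correctly identify as the one delicate point and resolve in the standard way.
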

    
    We remark that, while the original statement is about eigenvalues, in our setting this holds automatically for singular values.

    \subsection*{Cauchy's Interlace Theorem}
    
    Cauchy's Interlace Theorem is used in the proof of Theorem~\ref{thm:nearopt}. We refer the reader to~\citet{cauchy} for a proof this theorem.
    
    \begin{theorem}[Cauchy's Interlace Theorem]\label{thm:cauchy}
        Let $\mat{A}$ be a $n\times n$ Hermitian matrix, let $\mat{B}$ be the principal submatrix of $\mat{A}$ of order $(n-1)\times(n-1)$. If $\lambda_0\geq\dots\geq\lambda_{n-1}$ are the eigenvalues of $\mat{A}$, and $\mu_0\geq\dots\geq\mu_{n-2}$ are the eigenvalues of $\mat{B}$, then:
        \begin{equation}
            \lambda_0\geq\mu_0\geq\lambda_1\geq\mu_1\geq\dots\lambda_{n-2}\geq\mu_{n-2}\geq\lambda_{n-1}.
        \end{equation}
    \end{theorem}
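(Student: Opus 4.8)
The plan is to prove the interlacing via the Courant--Fischer variational characterization of the eigenvalues of a Hermitian matrix, which expresses each eigenvalue as an extremal value of the Rayleigh quotient over subspaces of a prescribed dimension. Concretely, for the Hermitian matrix $\mat{A}$ with eigenvalues $\lambda_0 \geq \dots \geq \lambda_{n-1}$, I would invoke both the max--min and the min--max forms,
\[
\lambda_k = \max_{\dim S = k+1}\ \min_{\substack{\mat{x} \in S \\ \mat{x} \neq 0}} \frac{\mat{x}^* \mat{A} \mat{x}}{\mat{x}^* \mat{x}} = \min_{\dim S = n-k}\ \max_{\substack{\mat{x} \in S \\ \mat{x} \neq 0}} \frac{\mat{x}^* \mat{A} \mat{x}}{\mat{x}^* \mat{x}},
\]
where $S$ ranges over linear subspaces of $\C^n$ of the indicated dimension. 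I would treat Courant--Fischer as the key external input; if a self-contained derivation is wanted, it follows from the spectral decomposition of $\mat{A}$ by choosing $S$ spanned by eigenvectors for the upper bound and invoking a dimension-counting intersection argument for the lower bound.

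The geometric heart of the proof is to realize $\mat{B}$ as the compression of $\mat{A}$ to the coordinate hyperplane $V = \{\mat{x} \in \C^n : \mat{x}(n-1) = 0\} \cong \C^{n-1}$, i.e. the submatrix obtained by deleting the last row and column (the general case reduces to this one by a permutation similarity, which preserves all eigenvalues). For every $\mat{x} \in V$ one has $\mat{x}^* \mat{B}\mat{x} = \mat{x}^* \mat{A}\mat{x}$, since the quadratic form of $\mat{B}$ is exactly the restriction of that of $\mat{A}$ to vectors whose last coordinate vanishes. Hence the same Courant--Fischer formulas hold for $\mat{B}$, but with the competing subspaces now constrained to lie inside $V$:
\[
\mu_k = \max_{\substack{S \subseteq V \\ \dim S = k+1}}\ \min_{\substack{\mat{x} \in S \\ \mat{x} \neq 0}} \frac{\mat{x}^* \mat{A} \mat{x}}{\mat{x}^* \mat{x}} = \min_{\substack{S \subseteq V \\ \dim S = n-1-k}}\ \max_{\substack{\mat{x} \in S \\ \mat{x} \neq 0}} \frac{\mat{x}^* \mat{A} \mat{x}}{\mat{x}^* \mat{x}}.
\]

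With both matrices written through the same Rayleigh quotient, the two interlacing inequalities fall out by comparing the feasible families of subspaces. For $\lambda_k \geq \mu_k$ I would use the max--min form: every $(k+1)$-dimensional $S \subseteq V$ is in particular a $(k+1)$-dimensional subspace of $\C^n$, so the maximization defining $\lambda_k$ runs over a strictly larger family and therefore yields a value at least as large as $\mu_k$. For $\mu_k \geq \lambda_{k+1}$ I would instead use the min--max form: the subspaces $S \subseteq V$ with $\dim S = n-1-k$ are precisely \emph{some} of the $(n-(k+1))$-dimensional subspaces of $\C^n$ appearing in the min--max formula for $\lambda_{k+1}$, and minimizing over this smaller family can only raise the value, giving $\mu_k \geq \lambda_{k+1}$. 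Chaining these inequalities for $k = 0, \dots, n-2$ reproduces the full interlacing chain $\lambda_0 \geq \mu_0 \geq \lambda_1 \geq \dots \geq \mu_{n-2} \geq \lambda_{n-1}$.

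I expect the main obstacle to be bookkeeping rather than conceptual: keeping the two dimension conventions consistent (a subspace of dimension $k+1$ in the max--min form versus $n-k$ in the min--max form) and checking that passing from $\C^n$ to the hyperplane $V$ lowers each relevant count by exactly one, so that $n-1-k$ computed inside $V$ coincides with $n-(k+1)$ computed in $\C^n$. The only genuine content beyond this is Courant--Fischer itself; if it is not taken as given, the effort concentrates in the intersection argument guaranteeing that any subspace of dimension $k+1$ meets the span of the eigenvectors for $\lambda_k, \dots, \lambda_{n-1}$ nontrivially, which supplies the matching lower bound for the extremal Rayleigh quotient.
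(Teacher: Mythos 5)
Your proof is correct. Note that the paper itself does not prove this statement: Theorem~\ref{thm:cauchy} is recalled in the appendix purely as a supporting classical result, with the proof deferred to the cited reference, so there is no in-paper argument to compare against. Your route is the canonical variational one, and the delicate points are all handled: the compression identity $\mat{x}^*\mat{A}\mat{x} = \mat{x}^*\mat{B}\mat{x}$ for $\mat{x}$ in the coordinate hyperplane $V$ is exactly right (writing $\mat{x}=(\mat{y},0)$ gives $\mat{x}^*\mat{A}\mat{x}=\mat{y}^*\mat{B}\mat{y}$ with $\mat{x}^*\mat{x}=\mat{y}^*\mat{y}$), the reduction of an arbitrary principal submatrix to the leading one by permutation similarity is legitimate since conjugation by a permutation preserves the spectrum and carries $\mat{B}$ to a matrix with the same eigenvalues, and the dimension bookkeeping checks out: in the min--max form the subspaces for $\mu_k$ inside $V$ have dimension $(n-1)-k$, which coincides with the dimension $n-(k+1)$ required for $\lambda_{k+1}$ in $\C^n$, so restricting the minimization to the subfamily of subspaces contained in $V$ can only increase the value, yielding $\mu_k \geq \lambda_{k+1}$, while the max--min comparison over the enlarged family gives $\lambda_k \geq \mu_k$. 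One remark for context: short proofs of this theorem in the literature often proceed differently, e.g.\ by a continuity or perturbation argument (adding $t\,\mat{e}\mat{e}^*$ on the deleted coordinate and letting $t\to\infty$, tracking the monotone motion of eigenvalues) or via interlacing properties of characteristic polynomials; your Courant--Fischer argument trades that analytic machinery for the intersection-counting lemma underlying the variational characterization, which you correctly flag as the only genuinely external input.
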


   \subsection*{Rank of the Principal Submatrix}
    
    We conclude by recalling the result of~\citet{minor}, which shows the relation between the rank of the Hankel matrix and the one of its leading principal submatrix.
    
    \begin{theorem}[\citet{minor}]
        A Hankel matrix $\H$ has a finite rank $r$ if and only if the first $r$ rows of $\H$ are linearly independent, and generate the row $r+1$ as a linear combination.
    \end{theorem}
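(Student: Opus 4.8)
The plan is to exploit the defining shift structure of a Hankel matrix. Writing $\H(i,:)$ for the $i$-th row, the entries are $\H(i,j)=f(i+j)$, so that $\H(i+1,j)=f(i+1+j)=\H(i,j+1)$: each row is the left shift of the previous one. I would first isolate this as the single structural fact that drives the whole argument, and then phrase everything in terms of the rows $\H(0,:),\H(1,:),\dots$ and the finitely generated span they produce.

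The heart of the argument is a propagation lemma: if some row satisfies $\H(m,:)=\sum_{i=0}^{m-1}c_i\,\H(i,:)$, then the same coefficients relate the shifted rows, namely $\H(m+1,:)=\sum_{i=0}^{m-1}c_i\,\H(i+1,:)$. I would prove this by evaluating both sides at an arbitrary column $j$: $\H(m+1,j)=f(m+1+j)=\sum_i c_i\, f(i+1+j)=\sum_i c_i\,\H(i+1,j)$, invoking the assumed relation at column $j+1$. Since $\H(i+1,:)$ for $i<m$ ranges over $\H(1,:),\dots,\H(m,:)$, and $\H(m,:)$ itself lies in $\operatorname{span}\{\H(0,:),\dots,\H(m-1,:)\}$, it follows that $\H(m+1,:)$ lies in that same span; iterating, every row $\H(m',:)$ with $m'\ge m$ lies in $\operatorname{span}\{\H(0,:),\dots,\H(m-1,:)\}$. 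The upshot is that the first linear dependence among the rows, read in order, caps the rank once and for all.

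With the lemma in hand both directions are short. For the backward direction, assume the first $r$ rows are linearly independent and $\H(r,:)\in\operatorname{span}\{\H(0,:),\dots,\H(r-1,:)\}$; the lemma shows every later row lies in this $r$-dimensional span, so the row space has dimension exactly $r$, i.e. $\rank(\H)=r$. For the forward direction, assume $\rank(\H)=r<\infty$ and let $d$ be the smallest index for which $\H(d,:)$ is a linear combination of $\H(0,:),\dots,\H(d-1,:)$; such a $d$ exists and satisfies $d\le r$, since $r+1$ rows can never be linearly independent. By minimality the rows $\H(0,:),\dots,\H(d-1,:)$ are independent, and by the lemma all subsequent rows collapse into their span, forcing $\rank(\H)=d$; comparing with $\rank(\H)=r$ gives $d=r$, which is precisely the claimed statement. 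The main obstacle I anticipate is bookkeeping rather than conceptual: one must check that the propagation relation holds simultaneously at all infinitely many columns and that the coefficients genuinely transfer from the dependence at index $d$ to every index beyond it. This is exactly where the one-letter, single-shift structure is essential, since with a single shift operator one dependence is enough to trigger the cascade that fixes the rank.
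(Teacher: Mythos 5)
Your proof is correct. Note, however, that the paper itself offers no proof of this statement: it is recalled in the appendix as a known result, cited from the literature, so there is nothing to compare against line by line. What you have written is essentially the classical argument behind this fact (it is the same mechanism that underlies Kronecker's theorem on finite-rank Hankel operators): the identity $\H(i+1,j)=\H(i,j+1)$ makes the row sequence shift-invariant, so the first linear dependence $\H(d,:)=\sum_{i<d}c_i\H(i,:)$ propagates to $\H(d+t,:)=\sum_{i<d}c_i\H(i+t,:)$ for all $t\ge 0$, and a strong induction traps every subsequent row in the span of the first $d$ rows. Your handling of both directions is sound: the backward direction follows immediately from the propagation lemma, and in the forward direction the extraction of the minimal index $d\le r$ (via the largest nonzero coefficient in a dependence among rows $0,\dots,r$) and the conclusion $d=r$ are both correct. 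The only point worth making explicit in a polished write-up is the strong-induction bookkeeping you allude to: after $t$ shifts the relation expresses $\H(d+t,:)$ through rows $t,\dots,d+t-1$, all of which lie in the span of the first $d$ rows by the inductive hypothesis, not by the original dependence alone. With that spelled out, the proof is complete and self-contained, which is arguably a useful addition given that the paper delegates it to a citation.
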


\end{document}